\newcommand\bibstyle@comma{\bibpunct(),a,,}
\newcommand\bibstyle@semicolon{\bibpunct();a,,}
\pretocmd\citet{\citestyle{comma}}\relax\relax
\pretocmd\Citet{\citestyle{comma}}\relax\relax
\pretocmd\citep{\citestyle{semicolon}}\relax\relax
\pretocmd\Citep{\citestyle{semicolon}}\relax\relax
\newtheorem{mythm}{Theorem}
\numberwithin{mythm}{section}
\newtheorem{mylem}[mythm]{Lemma}
\newtheorem{mydef}[mythm]{Definition}
\newtheorem{mybsp}[mythm]{Example}
\newtheorem{myann}[mythm]{Assumption}
\newtheorem{mybem}[mythm]{Remark}
\numberwithin{figure}{section}
\crefname{mythm}{theorem}{theorems}
\crefname{mylem}{lemma}{lemmas}
\crefname{myprop}{proposition}{propositions}
\crefname{myann}{assumption}{assumptions}
\newcommand{\iid}{\text{i.i.d.}}
\newcommand{\ie}{\enm{\text{i.e.\@}}}
\newcommand{\vgl}{\enm{\text{cf.\@}}}
\newcommand{\seite}{\enm{\text{p.\@}}}
\newcommand{\eg}{\enm{\text{e.g.\@}}}
\newcommand{\wenn}{\text{, if }}
\newcommand{\sonst}{\text{, else}}
\newcommand{\fs}{\enm{\text{a.s.\@}}}
\newcommand{\einschraenkung}{\,\rule[-2mm]{0.1mm}{5mm}\,{}}
\newcommand{\einschraenkungklein}{\,\rule[-1mm]{0.1mm}{3mm}\,{}}
\newcommand{\enm}[1]{\ensuremath{#1}\xspace}
\newcommand{\limn}{\enm{\lim_{n\to\infty}}}
\newcommand{\diff}{\enm{\,\mathrm{d}}}
\newcommand{\R}{\enm{\mathbb{R}}}
\newcommand{\N}{\enm{\mathbb{N}}}
\newcommand{\eps}{\enm{\varepsilon}}
\newcommand{\lb}{\enm{\lambda}}
\newcommand{\Ind}[1][]{\enm{\mathds{1}_{#1}}}
\renewcommand{\P}{\enm{\textnormal{P}}}
\newcommand{\Pxvar}{\enm{\P^X}}
\newcommand{\Pbed}[2][\cdot]{\enm{\P(#1\,|\,#2)}} 
\newcommand{\Q}{\enm{\textnormal{Q}}}
\newcommand{\D}{\enm{D}} 
\newcommand{\DVert}{\enm{\textnormal{D}}}
\newcommand{\Dn}{\enm{\D_n}} 
\newcommand{\DVertn}{\enm{\DVert_n}}
\newcommand{\ew}[2][]{\enm{\mathbb{E}_{#1}\left[#2\right]}}
\newcommand{\X}{\enm{\mathcal{X}}}
\newcommand{\Y}{\enm{\mathcal{Y}}}
\newcommand{\XX}{\enm{\X\times\X}}
\newcommand{\XY}{\enm{\X\times\Y}}
\newcommand{\XYR}{\enm{\X\times\Y\times\R}}
\newcommand{\YR}{\enm{\Y\times\R}}
\newcommand{\MXY}{\enm{\mathcal{M}_1(\XY)}}
\newcommand{\BX}{\enm{\mathcal{B}_{\X}}}
\newcommand{\BY}{\enm{\mathcal{B}_{\Y}}}
\newcommand{\BXY}{\enm{\mathcal{B}_{\XY}}}
\renewcommand{\L}[2][]{\enm{L_{#2}\ifthenelse{\isempty{#1}}{}{(#1)}}}
\newcommand{\Lp}{\L{p}}
\newcommand{\Linfty}{\L{\infty}}
\newcommand{\Lppxvar}{\enm{\Lp(\Pxvar)}} 
\newcommand{\Linftypxvar}{\enm{\Linfty(\Pxvar)}}
\newcommand{\Linftypnix}[1][i]{\enm{\Linfty(\Pnix[#1])}}
\newcommand{\norm}[2]{\enm{\left|\left|#2\right|\right|_{#1}}}
\newcommand{\normSup}[1]{\norm{\infty}{#1}}
\newcommand{\normH}[1]{\norm{\H}{#1}}
\newcommand{\normLppxvar}[1]{\norm{\Lppxvar}{#1}} 
\newcommand{\normLinftypxvar}[1]{\norm{\Linftypxvar}{#1}}
\newcommand{\normLinftypnix}[2][i]{\norm{\Linftypnix[#1]}{#2}}
\renewcommand{\H}{\enm{H}} 
\renewcommand{\k}{\enm{k}}
\newcommand{\loss}{\enm{L}}
\newcommand{\risk}[1][\loss,\P]{\enm{\mathcal{R}_{#1}}} 
\newcommand{\riskempn}[1][\loss,\DVertn]{\enm{\mathcal{R}_{#1}}} 
\newcommand{\riskbayes}[1][\loss,\P]{\enm{\mathcal{R}_{#1}^*}}
\newcommand{\riskoptH}[1][\loss,\P,\H]{\enm{\mathcal{R}_{#1}^*}} 
\newcommand{\innerrisk}[1][\loss,\Q]{\enm{\mathcal{C}_{#1}}} 
\newcommand{\innerriskbed}[1][x]{\innerrisk[\loss,\Pbed{#1}]} 
\newcommand{\innerriskbedbayes}[1][x]{\enm{\innerrisk[\loss,\Pbed{#1}]^*}} 
\newcommand{\fbayes}{\enm{f_{\loss,\P}^*}}
\newcommand{\fn}{\enm{f_n}}
\newcommand{\ftilde}[1][]{\ifthenelse{\isempty{#1}}{\enm{\tilde{f}}}{\enm{\tilde{f}_{#1}}}}
\newcommand{\unif}[1][\enm{0,1}]{\enm{\mathcal{U}(#1)}}
\newcommand{\fntheo}{\enm{f_{\P,n}}} 
\newcommand{\fnemp}{\enm{f_{\DVertn,n}}} 
\newcommand{\fntheoext}{\enm{f_{\loss,\P,\lbnbold,\knbold}}} 
\newcommand{\fnempext}{\enm{f_{\loss,\DVertn,\lbnbold,\knbold}}} 
\newcommand{\fnitheo}[1][i]{\enm{f_{\P,n,#1}}} 
\newcommand{\fniemp}[1][i]{\enm{f_{\DVertn,n,#1}}} 
\newcommand{\fnitheodach}[1][i]{\enm{\hat{f}_{\P,n,#1}}} 
\newcommand{\fniempdach}[1][i]{\enm{\hat{f}_{\DVertn,n,#1}}} 
\newcommand{\fnitheoext}[1][i]{\enm{f_{\loss,\Pni[#1],\lbni[#1],\kni[#1]}}} 
\newcommand{\fniempext}[1][i]{\enm{f_{\loss,\DVertni[#1],\lbni[#1],\kni[#1]}}} 
\newcommand{\fnitheoextdach}[1][i]{\enm{\hat{f}_{\loss,\Pni[#1],\lbni[#1],\kni[#1]}}} 
\newcommand{\fniempextdach}[1][i]{\enm{\hat{f}_{\loss,\DVertni[#1],\lbni[#1],\kni[#1]}}} 
\newcommand{\fnitildenull}[1][i]{\enm{f_{\loss,\P,\lbntilde,\kninulltilde[#1]}}}
\newcommand{\fnjnull}{\enm{f_{\loss,\P,\lbntilde,\kjr[0]}}}
\newcommand{\fninull}[1][i]{\enm{f_{\loss,\Pni[#1],\betani[#1]^2\lbni[#1],\kninull[#1]}}}
\def\moverlay{\mathpalette\mov@rlay}
\def\mov@rlay#1#2{\leavevmode\vtop{%
		\baselineskip\z@skip \lineskiplimit-\maxdimen
		\ialign{\hfil$\m@th#1##$\hfil\cr#2\crcr}}}
\newcommand{\charfusion}[3][\mathord]{
	#1{\ifx#1\mathop\vphantom{#2}\fi
		\mathpalette\mov@rlay{#2\cr#3}
	}
	\ifx#1\mathop\expandafter\displaylimits\fi}
\newcommand{\kbold}{\enm{\bm{k}}} 
\newcommand{\smax}{\enm{s_\text{max}}}
\newcommand{\Xnbold}{\enm{\bm{\mathcal{X}_n}}}
\newcommand{\IndexPpositive}[1][\Xnbold,\P]{\enm{I_{#1}}}
\newcommand{\AnzPpositive}{\enm{\tilde{m}_n}} 
\newcommand{\Ij}[1][j]{\enm{I^{(#1)}}} 
\newcommand{\Xni}[1][i]{\enm{\X_{n,#1}}}
\newcommand{\Pni}[1][i]{\enm{\P_{n,#1}}}
\newcommand{\Pnix}[1][i]{\enm{\P_{n,#1}^X}}
\newcommand{\Dni}[1][i]{\enm{\D_{n,#1}}} 
\newcommand{\dni}[1][i]{\enm{d_{n,#1}}} 
\newcommand{\DVertni}[1][i]{\enm{\DVert_{n,#1}}}
\newcommand{\wni}[1][i]{\enm{w_{n,#1}}}
\newcommand{\lbnbold}{\enm{\bm{\lambda_{n}}}} 
\newcommand{\lbni}[1][i]{\enm{\lb_{n,#1}}}
\newcommand{\lbntilde}{\enm{\tilde{\lb}_n}} 
\newcommand{\knbold}{\enm{\bm{k_{n}}}} 
\newcommand{\kni}[1][i]{\enm{\k_{n,#1}}}
\newcommand{\knitilde}[1][i]{\enm{\tilde{\k}_{n,#1}}} 
\newcommand{\kninull}[1][i]{\enm{\k_{n,#1}^{(0)}}}
\newcommand{\kninulltilde}[1][i]{\enm{\tilde{\k}_{n,#1}^{(0)}}}
\newcommand{\kjr}[1][r]{\enm{\k^{(j,#1)}}} 
\newcommand{\kjbold}[1][j]{\enm{\bm{k^{(#1)}}}}
\newcommand{\betabold}{\enm{\bm{\beta}}}
\newcommand{\betajr}[1][r]{\enm{\beta^{(j,#1)}}}
\newcommand{\betajbold}[1][j]{\enm{\bm{\beta^{(#1)}}}}
\newcommand{\betani}[1][i]{\enm{\beta_{n,#1}}}
\newcommand{\Hni}[1][i]{\enm{\H_{n,#1}}}
\newcommand{\Hnitilde}[1][i]{\enm{\tilde{\H}_{n,#1}}} 
\newcommand{\Hninull}[1][i]{\enm{\H_{n,#1}^{(0)}}}
\newcommand{\Hninulltilde}[1][i]{\enm{\tilde{\H}_{n,#1}^{(0)}}}
\newcommand{\Hjr}[1][r]{\enm{\H^{(j,#1)}}} 
\title{\textbf{Lp- and Risk Consistency of Localized SVMs}}
\date{May 16, 2023}
\author{\textbf{Hannes K\"ohler}\thanks{Email: \href{mailto:hannes.koehler@uni-bayreuth.de}{\texttt{hannes.koehler@uni-bayreuth.de}}}\\
		Department of Mathematics, University of Bayreuth, Germany
	}
\begin{document}
\maketitle

\begin{abstract}
	Kernel-based regularized risk minimizers, also called support vector machines (SVMs), are known to possess many desirable properties but suffer from their super-linear computational requirements when dealing with large data sets. This problem can be tackled by using localized SVMs instead, which also offer the additional advantage of being able to apply different hyperparameters to different regions of the input space. In this paper, localized SVMs are analyzed with regards to their consistency. It is proven that they inherit \Lp- as well as risk consistency from global SVMs under very weak conditions and even if the regions underlying the localized SVMs are allowed to change as the size of the training data set increases.
	
	\vspace*{1ex}\noindent\textbf{Keywords:} localized learning, consistency, kernel methods, support vector machines, big data
\end{abstract}

\section{Introduction}\label{Sec:LocCons_Intro}

Kernel-based regularized risk minimizers based on a general loss function, which are also known as (general) support vector machines (SVMs), play an important role in statistical machine learning, which is due to two main reasons: First, they are known to possess many desirable theoretical properties such as universal consistency, statistical robustness and stability, and good learning rates, \vgl \citet{vapnik1995,vapnik1998,schoelkopf2002,cucker2007,steinwart2008}. Secondly, they are the solutions of finite-dimensional convex programs \citep[\vgl][]{smola2004} and empirically observe good performance \citep[\vgl][]{klambauer2017,paoletti2019}---at least if the data set is not too large. For large data sets, SVMs however suffer from their computational requirements growing at least quadratically in the number of training samples, with regards to both time and memory, \vgl \citet{platt1998,joachims1998,thomann2017}.

There exist different approaches to circumvent this problem, one of them being the use of localized SVMs, which implement the idea of not computing one SVM on the whole input space but instead dividing this input space into different (not necessarily disjoint) regions, computing SVMs on each of these regions, and then joining them together in order to obtain a global predictor. In addition to the computational advantage this approach offers, it can also yield improved predictions as it adds flexibility by allowing for differing underlying hyperparameters being chosen in the different regions. In \Cref{SubSec:LocCons_Loc_Overview}, we discuss these advantages in more detail, as well as briefly mentioning some of the different approaches for circumventing the computational challenges. 

The main goal of this paper is to derive new theoretical results on such localized SVMs. More specifically, we prove that localized SVMs are risk consistent as well as \Lp-consistent under certain mild conditions. Notably, we also allow for the regionalization, which underlies a localized SVM, to change as the size of the data set increases. Because of SVMs being defined as minimizers of some regularized risk function, risk consistency is the natural type of consistency to consider, and there already exist some results on risk consistency respectively learning rates (which imply risk consistency) of localized SVMs, \vgl \citet{hable2013,meister2016,dumpert2018,blaschzyk2022} among others. However, all of these in some aspects offer considerably less generality than the result we derive. On the other hand, \Lp-consistency is of interest as it compares functions themselves instead of their risks, and, to our knowledge, there do not exist any results on \Lp-consistency of localized SVMs so far.

The paper is organized as follows: \Cref{Sec:LocCons_Pre} contains some general prerequisites as well as a formal definition of SVMs, whereas the localized approach is described in more detail in \Cref{Sec:LocCons_Loc}. The main results can be found in \Cref{Sec:LocCons_Cons}, and finally, \Cref{Sec:LocCons_Discussion} gives a short summary.

\section{Prerequisites}\label{Sec:LocCons_Pre}

Before introducing localized SVMs in \Cref{SubSec:LocCons_Loc_Pre} and stating our results about their consistency in \Cref{Sec:LocCons_Cons}, we first need to define the underlying (non-localized) SVMs in more detail as well as state some additional prerequisites.

Given a training data set $\Dn:=((x_1,y_1),\dots,(x_n,y_n))\in(\XY)^n$ consisting of independent and identically distributed (\iid) observations sampled from some unknown probability measure \P on a space \XY, we aim at learning a function $f\colon\X\to\R$. More specifically, we denote by $(X,Y)$ a pair of random variables with values in \XY distributed according to \P, and the goal is to estimate certain characteristics of the conditional distribution \Pbed{X} of $Y$ given $X$. We impose the following standard and not very restrictive assumptions on the underlying space \XY throughout this paper:

\begin{myann}\label{Ann:LocCons_Pre_AllgAnn}
	Let \X be a complete separable metric space and let $\Y\subseteq\R$ be closed. Let \X and \Y be equipped with their respective Borel $\sigma$-algebras \BX and \BY. Let $\P\in\MXY$, where \MXY denotes the set of all Borel probability measures on the measurable space $(\XY,\BXY)$.
\end{myann}

Notably, $\Y\subseteq\R$ guarantees that the conditional probability \Pbed{X} does indeed uniquely exist \citep[\vgl][Theorems 10.2.1 and 10.2.2]{dudley2004} because \Y is Polish \citep[\vgl][\seite 157]{bauer2001}.

Which exact characteristics of \Pbed{X} are to be learned is determined by the chosen \textit{loss function}, which is a measurable function $\loss\colon\XYR\to[0,\infty)$. For example, estimating the conditional mean function can be approached by using the least squares loss, and conditional quantile functions can be estimated by using the pinball loss. $\loss(x,y,f(x))$ quantifies the loss associated with predicting $f(x)$ while the true output belonging to $x$ is $y$, and the goal is to find a predictor whose expected loss is as small as possible. To this end, we call
\begin{align*}
	\risk(f) := \ew[\P]{\loss(X,Y,f(X))}
\end{align*}
\textit{\loss-risk} (or just \textit{risk}) of a measurable function $f$, and
\begin{align*}
	\riskbayes := \inf\{\risk(f) \,|\, f\colon\X\to\R \text{ measurable}\}\,
\end{align*}
\textit{Bayes risk}. We call a measurable function \fbayes achieving $\risk(\fbayes)=\riskbayes$ a \textit{Bayes function}.

A sequence $(\fn)_{n\in\N}$ is called \textit{risk consistent} if
\begin{align*}
	\risk(\fn) \to \riskbayes\,,\qquad n\to\infty\,,
\end{align*}
in probability, and it is called \textit{\Lp-consistent} for some $p\in[1,\infty)$ if
\begin{align*}
	\normLppxvar{\fn-\fbayes} \to 0\,,\qquad n\to\infty\,,
\end{align*}
in probability, where \Pxvar denotes the marginal distribution on \X associated with \P. For the latter consistency property, we always assume \fbayes to \Pxvar-almost surely (\fs) uniquely exist. As mentioned in the introduction, the notion of \Lp-consistency does directly depend on the difference between the functions instead of on the difference between their risks, which additionally depends on the loss function and the conditional distribution of $Y$.

As \P is unknown, it is not possible to minimize \risk directly and one instead has to use the \textit{empirical risk}
\begin{align*}
	\riskempn(f) := \ew[\DVertn]{\loss(X,Y,f(X))} = \frac{1}{n}  \sum_{i=1}^{n} \loss(x_i,y_i,f(x_i))\,,
\end{align*}
where 
\begin{align*}
	\DVertn := \frac{1}{n}  \sum_{i=1}^{n} \delta_{(x_i,y_i)}\,
\end{align*}
is the empirical distribution corresponding to \Dn, with $\delta_{(x_i,y_i)}$ denoting the Dirac measure in $(x_i,y_i)$. In order to avoid overfitting, a regularization term  is added to this empirical risk, which results in the \textit{empirical SVM} being defined as the solution of the minimization problem
\begin{align}\label{eq:LocCons_Pre_empSVM}
	f_{\loss,\DVertn,\lb,\k} := \arg\inf_{f\in\H} \riskempn(f) + \lb\normH{f}^2\,.
\end{align}
Here, $\lb>0$ controls the amount of regularization and \H is a \textit{reproducing kernel Hilbert space} (RKHS) over \X. Each such RKHS is associated with a \textit{kernel} on \X, which is a symmetric and positive definite function $\k\colon\XX\to\R$. We call \k bounded if $\normSup{\k}:=\sup_{x\in\X}\sqrt{\k(x,x)}<\infty$. We refer to \citet{aronszajn1950,berlinet2004,saitoh2016} for a detailed introduction of kernels, RKHSs and their properties. 

The goal of \Cref{Sec:LocCons_Cons} is to derive \Lp- respectively risk consistency of localized versions of such SVMs as the size $n$ of the data set increases. As an intermediate step in the according proofs, we additionally need the \textit{theoretical SVM}
\begin{align}\label{eq:LocCons_Pre_theoSVM}
	f_{\loss,\P,\lb,\k} := \arg\inf_{f\in\H} \risk(f) + \lb\normH{f}^2\,.
\end{align}

As a last part of these prerequisites, we need to specify some properties of loss functions. We only investigate loss functions which are convex---by which we mean convexity in the last argument of \loss---and additionally distance-based. The latter is a property that is satisfied by most of the typical loss functions for regression tasks, but not necessarily by those used in classification tasks. However, some distance-based losses are also popular choices in classification tasks, like for example the least squares loss, \vgl \citet[Section~1.4]{gyorfi2002}.

\begin{mydef}\label{Def:LocCons_Pre_DistBasedLoss}
	A loss function $\loss\colon\XYR\to[0,\infty)$ is called \textit{distance-based} if there exists a representing function $\psi\colon \R\to[0,\infty)$ satisfying $\psi(0)=0$ and $\loss(x,y,t)=\psi(y-t)$ for all $(x,y,t)\in\XYR$. 
	\\
	Let $p\in(0,\infty)$. A distance-based loss $\loss\colon\XYR\to[0,\infty)$ with representing function $\psi$ is of
	\begin{enumerate}[label=(\roman*)]
		\item \textit{upper growth type} $p$ if there is a constant $c>0$ such that
		\begin{align*}
			&\psi(r) \le c\, (|r|^p+1) &\forall\, r\in\R\,,
			\intertext{\item \textit{lower growth type} $p$ if there is a constant $c>0$ such that}
			&\psi(r) \ge c\,|r|^p - 1 &\forall\, r\in\R\,,
		\end{align*}
		\item \textit{growth type} $p$ if \loss is of both upper and lower growth type $p$.
	\end{enumerate}
\end{mydef}
Since the first argument does not matter in distance-based loss functions, we often ignore it and write $\loss\colon\YR\to[0,\infty)$ and $\loss(y,t)$ instead.

For example, the aforementioned least squares loss and pinball loss are of growth type 2 and 1 respectively. Depending on the growth type $p$, our results require that the \textit{averaged $p$-th moment} of \P is finite, which guarantees that there exists a function in \H that has finite risk. This averaged $p$-th moment is defined as
\begin{align*}
	|\P|_p := 
	\left(\int_{\X}\int_{Y} |y|^p \diff\Pbed[y]{x}\diff\Pxvar(x)\right)^{1/p} = \left(\int_{\X}|\Pbed{x}|_p^p\diff\Pxvar(x)\right)^{1/p}\,,
\end{align*}
thus making the moment condition $|\P|_p<\infty$ slightly more restrictive when dealing with loss functions of a higher growth type. In the definition of the averaged $p$-th moment, $|\Pbed{x}|_p$ denotes the \textit{$p$-th moment} of \Pbed{x}, where for an arbitrary distribution \Q on \Y this $p$-th moment is defined by
\begin{align*}
	|\Q|_p := \left(\int_{\Y} |y|^p \diff\Q(y)\right)^{1/p}\,.
\end{align*}

\section{Localized Approach}\label{Sec:LocCons_Loc}

As mentioned in the introduction, SVMs, while possessing many desirable theoretical properties, suffer from their super-linear (with respect to the size of the training data set) computational requirements when dealing with large data sets. There exist different approaches to reduce this computational complexity, one of them being localization. \Cref{SubSec:LocCons_Loc_Overview} gives a quick overview of some existing approaches as well as an introduction of the idea behind and the additional advantages of the localization approach. \Cref{SubSec:LocCons_Loc_Pre} formally defines localized SVMs and states requirements which the underlying structure, like the regions and the applied kernels, need to satisfy.

\subsection{Overview of localized and other approaches}\label{SubSec:LocCons_Loc_Overview}

Approaches to reduce the computational complexity of SVMs include online learning approaches such as stochastic gradient descent \citep[\eg,][]{smale2006,ying2006,dieuleveut2016,lin2016,lin2017b} as well as algorithms approximating the kernel matrix via column subsampling \citep[\eg,][]{williams2001,bach2013,alaoui2015,rudi2015} and random feature approximations of the kernel \citep[\eg,][]{rahimi2008,sriperumbudur2015,rudi2017,liu2022,mei2022}, with \citet{yang2012} comparing the last two approaches. Additionally, there are also methods combining multiple of these approaches \citep[\eg,][]{rudi2017b,meanti2020}.

Closer to the localized approach are methods that decompose the available data set into $m\in\N$ subsets and train $m$ ``small'' SVMs on these subsets instead of a single ``large'' one on all of \Dn, which can substantially reduce the training time as well as required storage space because of the aforementioned super-linear computational requirements of SVMs. This can for example be done by means of distributed learning \citep[\eg,][]{christmann2007b,zhang2015,guo2017,lin2017,muecke2018,lin2020}, which randomly splits \Dn into subsets, trains an SVM on each such subset, and then averages the resulting $m$ SVMs in order to obtain the final predictor.

In the localized approach, one also trains SVMs on subsets of \Dn, but the split of \Dn is now obtained in a spatial way---based on some regionalization of the input space \X---instead of randomly. Following early theoretical investigations of such localized approaches \citep{bottou1992,vapnik1993}, different methods for obtaining the required regions have been examined. These include decision trees \citep[\eg,][]{bennett1998,wu1999,tibshirani2007,chang2010}, $k$-nearest neighbors ($k$NN) methods \citep[\eg,][]{zhang2006,blanzieri2007,blanzieri2008,segata2010,hable2013} as well as variants of $k$-means \citep[\eg,][]{cheng2010,gu2013}. In comparison to distributed learning, this has the disadvantage that, no matter which method of regionalization is chosen, the process of regionalizing the input space clearly also takes some time for large data sets---albeit considerably less time than just training an SVM on the whole data set---, thus making the computational gain of such a localized approach in the training phase smaller than that of distributed learning. On the other hand, the evaluation of the resulting predictor for a test sample can actually be \emph{significantly faster} in localized approaches than it is in distributed ones: Whereas one has to evaluate each of the $m$ different SVMs (and then average the results) in distributed learning, it suffices to evaluate the one SVM belonging to the region of the test sample in localized learning (if the regions do not overlap). 

Furthermore, localizing the SVM approach can also yield advantages regarding the \emph{quality of prediction}---compared to distributed learning as well as regular SVMs: Whereas the underlying true function, which one aims to estimate, can of course exhibit discontinuities, SVMs based on a continuous and bounded kernel such as the commonly used Gaussian RBF kernel are always continuous (and bounded) themselves, \vgl \citet[Lemma~4.28]{steinwart2008}. This can lead to SVMs not accurately modeling the true function near such discontinuities, but instead greatly oscillating and overshooting---an effect that is also known from Fourier series, where it is called the Gibbs phenomenon, \vgl \citet{hewitt1979}. Additionally, in global learning approaches like SVMs, the complexity of the predictor is usually controlled globally by a very small amount of hyperparameters. Hence, an accurate prediction can be difficult for such global approaches if the complexity and variability of the true function, or that of the conditional distributions $\Pbed[Y]{X=x}$, greatly differ between different areas of the input space \X, even if the true function does not exhibit any discontinuities. Both of these problems can be overcome by the use of localized methods, as a good regionalization can split the input space into separate regions at (or at least close to) discontinuities and such that the complexity and variability do not change too much throughout the individual regions, see also \Cref{Abb:LocCons_Loc_AdvantageLocalization}.

\begin{figure}[t]
	\begin{center}
		\vspace*{-0.5cm}\includegraphics[width=\textwidth]{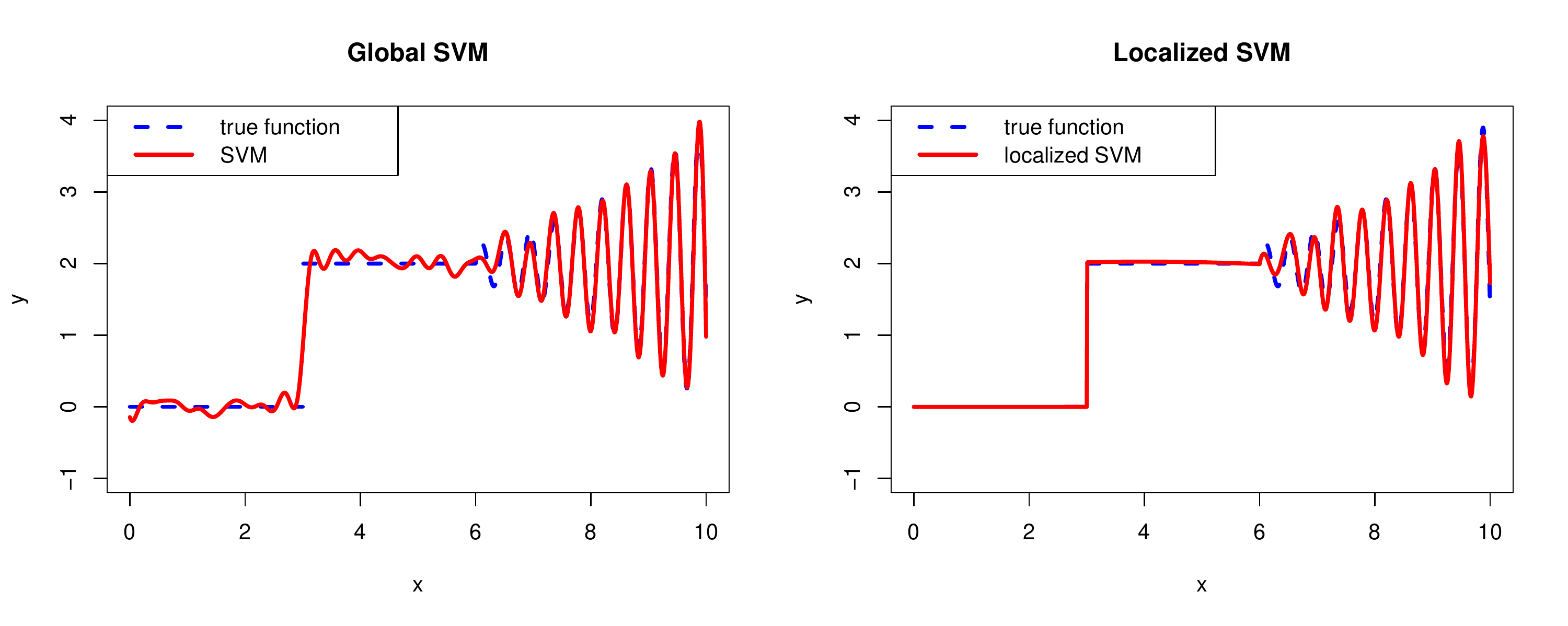}
		\vspace*{-0.5cm}\caption{A global SVM (left plot) and a localized SVM (right plot; splits between the regions at $x=3$ and $x=6$) fitted to the same data which was generated according to the plotted true function and some normally distributed error. The global SVM (slightly) overshoots at the discontinuity at $x=3$ and oscillates too much for $x\le 6$ because the underlying hyperparameters have to be chosen in a way that also allows for a reasonably good fit for $x>6$, where the true function oscillates very quickly. The localized SVM does not exhibit these problems and yields a considerably better fit overall.}
		\label{Abb:LocCons_Loc_AdvantageLocalization}
	\end{center}
\end{figure}

This intuition of localized SVMs also being able to improve regular SVMs with regard to the quality of prediction gets affirmed by \citet{blaschzyk2022}, who, in the case of using the hinge loss for classification, derived learning rates exceeding those known for regular SVMs. Whereas most of the papers on localized SVMs mentioned in the preceding paragraphs focus on the experimental analysis of a specific method of localization, \citet{blaschzyk2022} constitutes an example of a paper deriving theoretical results and additionally not requiring any special method of localization (instead only requiring the resulting regionalization to satisfy some conditions which are often quite mild). There are several papers taking a similar approach and also deriving learning rates for such localized SVMs, with \citet{thomann2017} also using the hinge loss and \citet{meister2016,muecke2019} investigating least squares regression. 

Whereas learning rates of course also imply (risk) consistency, they always require additional assumptions regarding the unknown probability measure \P because of the no-free-lunch theorem \citep[\vgl][]{devroye1982}, and most of the mentioned papers for example additionally require \X to be contained in some ball and \Y to be bounded as well. We however take an approach similar to \citet{dumpert2018,dumpert2020,koehler2022} who allowed for even more general regionalizations as well as more general kernels and loss functions and did not impose any restrictive assumptions regarding \P, and who then proved that localized SVMs are risk consistent (which we in some aspects considerably generalize in \Cref{Sec:LocCons_Cons}), statistically robust with respect to the maxbias as well as the influence function, and totally stable with respect to simultaneous changes in not only the probability measure but also the regularization parameter, the kernel and the regionalization. We derive results on \Lp- as well as risk consistency in \Cref{Sec:LocCons_Cons}.

\subsection{Prerequisites regarding localized SVMs}\label{SubSec:LocCons_Loc_Pre}

Before stating our results in \Cref{Sec:LocCons_Cons}, we first have to formally define localized SVMs as well as to specify the mild assumptions which we need to impose upon the regionalizations in order to be able to then derive our results.

As already mentioned, we actually allow for regionalizations that change with $n$. For $n\in\N$, we define the regionalization \Xnbold as $\Xnbold:=\{\Xni[1],\dots,\Xni[m_n]\}$ for sets $\Xni[1],\dots,\Xni[m_n]$. We further denote $\Xnbold(x):=\{\tilde{\X}\in\Xnbold\,|\,x\in\tilde{\X}\}$ for all $x\in\X$ and $n\in\N$, and assume the following three conditions to hold true:
\begin{description}[labelwidth=\widthof{\bfseries(R3)}]
	\item[(R1)] $\Xni[1],\dots,\Xni[m_n] \subseteq \X$ complete (as metric spaces) and measurable such that $\X=\bigcup_{i=1}^{m_n} \Xni$ for all $n\in\N$.
	\item[(R2)] $\exists\,\smax\in\N$ such that $|\Xnbold(x)|\le \smax$ for all $x\in\X$ and $n\in\N$.
	\item[(R3)] The sequence $(\Xnbold)_{n\in\N}$ is stochastically independent of the sequence $(\Dn)_{n\in\N}$ of training data sets.
\end{description}

\begin{mybem}
	Condition \textbf{(R3)} might seem restrictive at first glance because it seemingly constitutes a restriction to only using regionalizations whose construction does not take the observed data into account. However, one can easily circumvent this restriction by randomly partitioning the whole data set into not only the usual three parts---namely a training data set \Dn, a validation data set and a test data set---but four parts instead, where the fourth part is a regionalization data set. This way, the regionalizations can be chosen data-dependently without violating \textbf{(R3)}. By putting only a relatively small part of the available data into the regionalization data set---because one reason for regionalizing is to just reduce the subsequent training time of the SVMs, for which no ``perfect'' regionalization is necessary---, this procedure does not substantially reduce the amount of data available for training, validating and testing.
\end{mybem}

Note that \textbf{(R1)} tells us that, for every $n\in\N$, the regions need not necessarily be pairwise disjoint but can instead also overlap---as long as \textbf{(R2)} is satisfied, that is, as long as the number of regions overlapping does not exceed some global constant \smax in any point $x\in\X$. If the regionalization does not change with $n$, then \textbf{(R2)} is trivially satisfied for $\smax=m_1$.

\begin{mybem}
	By \citet[Lemma I.6.4 and Theorem I.6.12]{dunford1957}, any subset of a separable metric space is a separable metric space again if it is equipped with the metric of the original space. Hence, \Cref{Ann:LocCons_Pre_AllgAnn} being satisfied for \X implies it also being satisfied for the regions $\Xni$, $n\in\N$ and $i\in\{1,\dots,m_n\}$.
\end{mybem}

In order to define local SVMs on the different regions, we need to have a probability measure on each of these regions. It suggests itself to define these measures by restricting \P. For $n\in\N$ and $i\in\{1,\dots,m_n\}$, we define the local measure \Pni on $\Xni\times\Y$ by
\begin{align*}
	\Pni := \begin{cases}
		\frac{1}{\P(\Xni\times\Y)}\cdot \P\einschraenkung_{\Xni\times\Y} &\wenn \P(\Xni\times\Y)>0\\
		0 &\sonst.
	\end{cases}
\end{align*}
This obviously only is a probability measure if $\P(\Xni\times\Y)>0$, but we will see that we can mostly ignore the regions with $\P(\Xni\times\Y)=0$ for our results. We denote
\begin{align*}
	\IndexPpositive:=\big\{i\in\{1,\dots,m_n\}\,|\,\P(\Xni\times\Y)>0\big\}
\end{align*} 
and $\AnzPpositive:=|\IndexPpositive|$ for $n\in\N$. Similarly, we define the local empirical measures $\DVertni$ by
\begin{align*}
	\DVertni := \begin{cases}
		\frac{1}{\DVertn(\Xni\times\Y)}\cdot \DVertn\einschraenkung_{\Xni\times\Y} &\wenn \DVertn(\Xni\times\Y)>0\\
		0 &\sonst,
	\end{cases}
\end{align*}
such that (if $\DVertn(\Xni\times\Y)>0$) they are the empirical probability measures associated with the subsets $\Dni:=\Dn\cap(\Xni\times\Y)$ of \Dn, for which we denote $\dni:=|\Dni|$.

As mentioned before, one of the goals behind this localized approach is to increase the method's capability to accurately learn a function whose complexity and variability differ between different areas of the input space, by separating these areas into different regions. Since a principal mechanism for controlling the complexity of an SVM is the choice of the regularization parameter and of the kernel (respectively the hyperparameters of the kernel), one should therefore also be allowed to choose different regularization parameters and kernels in the different regions. We hence have, for each $n\in\N$, a vector of regularization parameters $\lbnbold := (\lbni[1],\dots,\lbni[m_n])$, with $\lbni>0$ for all $i\in\{1,\dots,m_n\}$, and a vector of kernels $\knbold := (\kni[1],\dots,\kni[m_n])$, where \kni is a kernel on \Xni for each $i\in\{1,\dots,m_n\}$. 

Based on the regularization parameters, kernels and a loss function \loss, one obtains from \eqref{eq:LocCons_Pre_theoSVM} SVMs 
\begin{align*}
	\fnitheoext \colon \Xni\to\R\,, \qquad n\in\N,\, i\in\{1,\dots,m_n\}\,, 
\end{align*}
which we call \textit{local SVMs} on \Xni. If \Pni is the zero measure, the above SVM is undefined and we just define it as the zero function, $\fnitheoext\equiv 0$, in this case. Analogously, we define the local empirical SVMs 
\begin{align*}
	\fniempext \colon \Xni\to\R\,, \qquad n\in\N,\, i\in\{1,\dots,m_n\}\,,
\end{align*}
as in \eqref{eq:LocCons_Pre_empSVM}, with $\fniempext\equiv 0$ if \DVertni is the zero measure.

Since we want to combine these local SVMs in order to obtain a global predictor on \X, we first need to extend them in a way such that they are defined on all of \X. That is, for all functions $g\colon\tilde{\X}\to\R$ on $\tilde{\X}\subseteq\X$, we define the zero-extension $\hat{g}\colon\X\to\R$ by
\begin{align*}
	\hat{g}(x) := \begin{cases}
		g(x) &\wenn x\in\tilde{\X}\,,\\
		0 &\sonst\,.
	\end{cases}
\end{align*}

Now, all that is left to do in order to obtain our global predictors, is to equip the local SVMs with weight functions which pointwisely control the influence of each local SVM in areas where two or more regions overlap. We only impose the following three standard assumptions for weight functions on them:
\begin{description}
	\item[(W1)] $\wni\colon\X\to[0,1]$ measurable for all $i\in\{1,\dots,m_n\}$ and $n\in\N$.
	\item[(W2)] $\sum_{i=1}^{m_n} \wni(x) = 1$ for all $x\in\X$ and $n\in\N$.
	\item[(W3)] $\wni(x)=0$ for all $x\notin\Xni$ and all $i\in\{1,\dots,m_n\}$ and $n\in\N$.
\end{description}

Our global predictor \fntheoext, which we call \textit{localized SVM} even though it is not necessarily an SVM itself, is then defined by
\begin{align}\label{eq:LocCons_Loc_DefGlobalPredictor}
	\fntheoext\colon \X\to\R\,,\, x\mapsto \sum_{i=1}^{m_n} \wni(x) \cdot \fnitheoextdach(x)
\end{align}
for $n\in\N$. Analogously, we define the \textit{empirical localized SVM} 
\begin{align}\label{eq:LocCons_Loc_DefGlobalPredictorEmp}
	\fnempext\colon \X\to\R\,,\, x\mapsto \sum_{i=1}^{m_n} \wni(x) \cdot \fniempextdach(x)
\end{align}
for $n\in\N$.

Finally, before stating the consistency results for localized SVMs in \Cref{Sec:LocCons_Cons}, we introduce the concept of \textit{families of kernels of type \betabold} which will be needed in those results.

\begin{mydef}\label{Def:LocCons_Loc_KernelFamily}
	Let $I$ be an index set such that $0\in I$. For kernels $k^{(r)}$ and constants $\beta^{(r)}\in(0,\infty)$, $r\in I$, we say that $\kbold:=(\k^{(r)})_{r\in I}$ is a \textit{family of kernels of type $\betabold:=(\beta^{(r)})_{r\in I}$} if, for all $r\in I$, 
	\begin{enumerate}[label=(\roman*)]
		\item $\H^{(r)}\supseteq\H^{(0)}$, where $\H^{(r)}$ and $\H^{(0)}$ are the RKHSs associated with $\k^{(r)}$ and $\k^{(0)}$ respectively, and
		\item $\norm{\H^{(r)}}{f}\le\beta^{(r)} \cdot \norm{\H^{(0)}}{f}$ for all $f\in\H^{(0)}$.
	\end{enumerate}
\end{mydef}

\begin{mybem}\label{Bem:LocCons_Loc_KernelFamily}
	By \citet[Theorem 2.17]{saitoh2016} (see also Part I.7 of \citealp{aronszajn1950}, and Section 4.5 of \citealp{berlinet2004}, for related considerations), condition (i) from \Cref{Def:LocCons_Loc_KernelFamily} already implies that there exists some $\beta^{(r)}\in(0,\infty)$ such that (ii) is satisfied as well. Hence, every family of kernels satisfying (i) will also be a family of kernels of type \betabold for suitable \betabold. Furthermore, the same theorem also yields that the two conditions from \Cref{Def:LocCons_Loc_KernelFamily} are equivalent to
	\begin{enumerate}
		\item[(iii)] $(\beta^{(r)})^2\cdot\k^{(r)} - \k^{(0)}$ is a kernel,
	\end{enumerate}
	for which reason families of kernels of type \betabold are equivalently characterized by (iii) holding true for all $r\in I$.
\end{mybem}

\begin{mybsp}\label{Bsp:LocCons_Loc_GRBF}
	Let $d\in\N$, $\X\subseteq\R^d$ non-empty and $I$ be an index set such that $0\in I$. For $r\in I$, define $\k^{(r)}$ as the Gaussian kernel with bandwidth $\gamma^{(r)}\in(0,\infty)$, that is,
	\begin{align*}
		k^{(r)}(x,x') := \exp\left(-\frac{\norm{2}{x-x'}^2}{(\gamma^{(r)})^2}\right) \qquad \forall\, x,x'\in\X\,. 
	\end{align*}
	By \citet[Proposition 4.46]{steinwart2008}, the conditions from \Cref{Def:LocCons_Loc_KernelFamily} are satisfied with $\beta^{(r)}:=(\gamma^{(0)}/\gamma^{(r)})^{d/2}$ if $\gamma^{(0)}\ge\sup_{r\in I\setminus\{0\}}\gamma^{(r)}$. 
	
	Hence, every family $(\k^{(r)})_{r\in J}$, $0\notin J$, of Gaussian kernels with bounded bandwidth can be turned into a family of kernels of type $\betabold=((\gamma^{(0)}/\gamma^{(r)})^{d/2})_{r\in I}$, $I:=J\cup\{0\}$, by choosing $\k^{(0)}$ as the Gaussian kernel with bandwidth $\gamma^{(0)}=\sup_{r\in J}\gamma^{(r)}$.
\end{mybsp}

We introduced these families of kernels of type \betabold since we will require all kernels \kni, $n\in\N$, $i\in\{1,\dots,m_n\}$, used in the local SVMs to come from the union of $\ell\in\N$ such families $\kjbold[1],\dots,\kjbold[\ell]$. To be more specific, \kjbold, $j=1,\dots,\ell$, will consist of kernels on \X and each \kni will be the restriction of such a kernel to $\Xni\times\Xni$. That is, we will have $\kni=\k^{(j_0,r_0)}\einschraenkung_{\Xni\times\Xni}$ for some $j_0\in\{1,\dots,\ell\}$ and $r_0\in\Ij[j_0]$, where $\Ij[j_0]$ denotes the index set of the $j_0$-th family. Based on this, we introduce the additional notation $\betani:=\beta^{(j_0,r_0)}$ and $\kninull:=\k^{(j_0,0)}\einschraenkung_{\Xni\times\Xni}$ (in case of ambiguity regarding $j_0$ and $r_0$, any of the options may be chosen), which will be needed later on. 

Note that the concept of families of kernels of type \betabold also allows for infinite index sets (see also \Cref{Bsp:LocCons_Loc_GRBF}). This will lead to the kernels \kni, $n\in\N$, $i\in\{1,\dots,m_n\}$, being allowed to be chosen from an possibly infinite set of kernels.

\section{Consistency of Localized SVMs}\label{Sec:LocCons_Cons}

In the following, we first derive \Lp-consistency and afterwards risk consistency of localized SVMs as defined in \Cref{SubSec:LocCons_Loc_Pre}. To our knowledge, there do not exist any results on \Lp-consistency of localized SVMs so far, and whereas there do exist results on their risk consistency, our result significantly generalizes those in several ways. Before stating the results, we impose the following assumptions, which we assume to hold true throughout this section:

\begin{myann}\label{Ann:LocCons_Cons_Ann}\phantom{text}
	\begin{itemize}
		\item Let $\loss\colon\YR\to[0,\infty)$ be a convex, distance-based loss function of growth type $p\in[1,\infty)$.
		\item Let $\Xnbold:=\{\Xni[1],\dots,\Xni[m_n]\}$, $n\in\N$, be regionalizations satisfying \textbf{(R1)}, \textbf{(R2)}, \textbf{(R3)}, and let \wni, $n\in\N$ and $i=1,\dots,m_n$, be weight functions satisfying \textbf{(W1)}, \textbf{(W2)}, \textbf{(W3)}.
		\item Let $\ell\in\N$ and let, for $j=1,\dots,\ell$, $\kjbold:=(\kjr)_{r\in\Ij}$ be a family of uniformly bounded and measurable kernels of type $\betajbold:=(\betajr)_{r\in\Ij}$ on \X with separable RKHSs $(\Hjr)_{r\in\Ij}$ such that $\Hjr[0]\subseteq\Lppxvar$ dense. Let, for all $n\in\N$ and $i\in\{1,\dots,m_n\}$,
		\begin{align*}
			\kni\in\left\{\kjr\einschraenkung_{\Xni\times\Xni} \,:\, j\in\{1,\dots,\ell\}, r\in\Ij\right\}\,.
		\end{align*} 
		\item Assume $|\P|_p<\infty$ and $\sup_{n\in\N,i\in\IndexPpositive} |\Pni|_p <\infty$.
	\end{itemize}
\end{myann}

\begin{mybem}\label{Bem:LocCons_Cons_MomentenAlternativen}
	The condition $\sup_{n\in\N,i\in\IndexPpositive} |\Pni|_p <\infty$ is disadvantageous in that it requires knowledge about all regionalizations \Xnbold, $n\in\N$. Because 
	\begin{align*}
		|\Pni|_p^p = \int_{\Xni} |\Pbed{x}|_p^p \diff\Pnix(x) \le \sup_{x\in\X}|\Pbed{x}|_p^p
	\end{align*}
	for all $n\in\N$ and $i\in\IndexPpositive$ (and analogously also $|\P|_p^p\le\sup_{x\in\X}|\Pbed{x}|_p^p$), it however suffices if $\sup_{x\in\X}|\Pbed{x}|_p<\infty$. 
	
	On the other hand, even though the finiteness of $|\P|_p$ does already imply the finiteness of $|\Pni|_p$ for all $n\in\N$ and $i\in\IndexPpositive$ because 
	\begin{align*}
		|\Pni|_p^p &= \int_{\Xni} |\Pbed{x}|_p^p \diff\Pnix(x) = \frac{1}{\Pxvar(\Xni)} \cdot \int_{\Xni} |\Pbed{x}|_p^p \diff\Pxvar(x)\\ 
		&\le \frac{1}{\Pxvar(\Xni)} \cdot \int_{\X} |\Pbed{x}|_p^p \diff\Pxvar(x) = \frac{1}{\Pxvar(\Xni)} \cdot |\P|_p^p\,,
	\end{align*}
	$|\P|_p$ being finite is not sufficient to guarantee $\sup_{n\in\N,i\in\IndexPpositive} |\Pni|_p <\infty$, as can be seen from the following example:
	
	Let $\Pxvar:=\unif$ and $\Pbed{X=x}:=\unif[0,x^{-1/2}]$ for all $x\in(0,1)$, where $\unif[a,b]$ denotes the uniform distribution on $(a,b)$. Then, we have
	\begin{align*}
		|\P|_1 = \int_{0}^{1}\int_{0}^{\frac{1}{\sqrt{x}}} y\sqrt{x} \diff y\diff x = 1 < \infty\,,
	\end{align*}
	but for $\Xni[1]:=(0,\frac{1}{n})$, $n\in\N$, we obtain
	\begin{align*}
		|\Pni[1]|_1 = \int_{0}^{\frac{1}{n}}\int_{0}^{\frac{1}{\sqrt{x}}} y\sqrt{x} \diff y \cdot n \diff x = \sqrt{n}\,,
	\end{align*}
	which yields $\sup_{n\in\N,i\in\IndexPpositive} |\Pni|_p =\infty$.
	
	Hence, the condition $\sup_{n\in\N,i\in\IndexPpositive} |\Pni|_p <\infty$ is not superfluous in itself and can not just be erased without adding a replacement like $\sup_{x\in\X}|\Pbed{x}|_p<\infty$.
\end{mybem}

The subsequent theorem shows that localized SVMs are indeed \Lp-consistent under \Cref{Ann:LocCons_Cons_Ann}.

\begin{mythm}\label{Thm:LocCons_Cons_LpCons}
	Let \Cref{Ann:LocCons_Pre_AllgAnn,Ann:LocCons_Cons_Ann} be satisfied. Let \fnempext, $n\in\N$, be defined as in \eqref{eq:LocCons_Loc_DefGlobalPredictorEmp} and assume that \fbayes is \Pxvar-\fs unique.
	Define $p_1^*:=\max\{p+1,p(p+1)/2\}$. Further choose $p_2^*:=\max\{2(p-1)/p,p-1\}$ if $p>1$ and $p_2^*\in (0,\infty)$ arbitrary if $p=1$. If the regularization parameters satisfy $\lbni\in(0,C)$ for all $n\in\N$ and $i\in\{1,\dots,m_n\}$ for some $C\in(0,\infty)$, as well as
	\begin{align}\label{eq:Thm_LocCons_Cons_LpCons_ErsteBedingungLambda}
		\max_{i\in\IndexPpositive}\betani^2\lbni\to 0
	\end{align}
	and 
	\begin{align}\label{eq:Thm_LocCons_Cons_LpCons_ZweiteBedingungLambda}
		\min_{i\in\IndexPpositive} \frac{\lbni^{p_1^*}\dni}{\AnzPpositive^{p_2^*}} \to \infty
	\end{align}
	as $n\to\infty$, then
	\begin{equation*}
		\limn \normLppxvar{\fnempext-\fbayes} = 0 \qquad \text{in probability $\P^\infty$.}
	\end{equation*}
\end{mythm}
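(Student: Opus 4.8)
The plan is to reduce $L_p$-consistency of the localized SVM to the already-known $L_p$-consistency of a single global SVM, exploiting that $\Hjr[0]$ is dense in $\Lppxvar$. The natural route splits the error as
\begin{align*}
\normLppxvar{\fnempext-\fbayes} \le \normLppxvar{\fnempext-\fntheoext} + \normLppxvar{\fntheoext-\fbayes},
\end{align*}
where $\fntheoext$ is the theoretical localized SVM from \eqref{eq:LocCons_Loc_DefGlobalPredictor}. The second (deterministic, approximation) term should go to $0$ using condition \eqref{eq:Thm_LocCons_Cons_LpCons_ErsteBedingungLambda}: on each region, $\normH[\Hninull]{\fnitheoextdach}^2 \le \betani^{-2}\cdot(\text{regularized risk})$ and a comparison argument with a fixed approximating element $g\in\Hjr[0]$ bounds the local regularized risks in terms of $\betani^2\lbni$; summing over regions with the overlap bound \textbf{(R2)} and the weights \textbf{(W1)}--\textbf{(W3)}, plus the moment conditions $|\P|_p<\infty$, $\sup|\Pni|_p<\infty$, should yield convergence of $\risk(\fntheoext)$ to $\riskbayes$, and then $L_p$-convergence to $\fbayes$ via uniqueness of the Bayes function together with a standard "risk convergence plus growth type $p$ implies $L_p$ convergence" argument (convexity of $\loss$ and lower growth type $p$ give that a minimizing sequence of the risk converges in $\Lppxvar$ to the unique minimizer).

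For the first (stochastic) term, **the hard part will be** controlling, uniformly in $n$, the sum over the (possibly growing number $\AnzPpositive$ of) regions of the local estimation errors $\normLppxvar{\fniempextdach - \fnitheoextdach}$ — each local SVM is trained on only $\dni$ samples, and $\dni$ is itself random. The plan is: (a) condition on $\Xnbold$ (legitimate by \textbf{(R3)}) and on the region counts; (b) on each region use a standard SVM stability/oracle bound giving $\normH[\Hni]{\fniempextdach-\fnitheoextdach}$ small in probability at a rate governed by $\lbni$ and $\dni$, convert to $\Lppxvar$-norm via boundedness of $\kni$ (uniform bound across the $\ell$ families) and a bound on $\Pxvar(\Xni)$; (c) raise to the $p$-th power, sum over $i\in\IndexPpositive$, and use \textbf{(R2)} so that the local $L_p(\Pnix)$ errors reassemble into a global $L_p(\Pxvar)$ error with only a factor $\smax$; (d) show the resulting bound — which will involve $\AnzPpositive$, the $\lbni$'s and the $\dni$'s — tends to $0$ in probability precisely because of condition \eqref{eq:Thm_LocCons_Cons_LpCons_ZweiteBedingungLambda}. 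The exponents $p_1^*$ and $p_2^*$ are presumably exactly what comes out of this bookkeeping: $p_1^*$ tracks how a $\lambda$-dependent SVM bound degrades when passing through the $p$-th power and the growth-type-$p$ moment terms, and $p_2^*$ counts the loss incurred from splitting the unit mass across $\AnzPpositive$ regions (each contributing a $1/\Pxvar(\Xni)$-type factor). One must also handle the randomness of $\dni$ versus the deterministic-looking quantity $\dni$ in \eqref{eq:Thm_LocCons_Cons_LpCons_ZweiteBedingungLambda}: since $\dni \sim \mathrm{Bin}(n,\Pxvar(\Xni))$ given $\Xnbold$, a concentration argument shows $\dni$ is comparable to $n\Pxvar(\Xni)$ with high probability, which is what lets the condition be stated in terms of $\dni$.

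Concretely, I would organize the proof as: Step 1, fix $\eps>0$ and use density of $\Hjr[0]$ in $\Lppxvar$ to pick $g$ with $\normLppxvar{g-\fbayes}$ small (and, via continuity of the risk functional under growth type $p$, also $\risk(g)$ close to $\riskbayes$). Step 2, bound the theoretical term: show $\risk(\fntheoext)\to\riskbayes$ by comparing, region-wise, the regularized risk at $\fnitheoextdach$ to that at $g\einschraenkung_{\Xni}$, using (ii) of \Cref{Def:LocCons_Loc_KernelFamily} to bound $\lbni\normH[\Hni]{\cdot}^2$ and \eqref{eq:Thm_LocCons_Cons_LpCons_ErsteBedingungLambda}, then assembling with \textbf{(W2)} and \textbf{(R2)}; deduce $L_p$-convergence $\fntheoext\to\fbayes$ from uniqueness. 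Step 3, bound the empirical term: condition on $\Xnbold$, apply a quantitative consistency/stability estimate for each local empirical SVM against its theoretical counterpart, pass to $\Lppxvar$-norms using kernel boundedness, take $p$-th powers and sum using \textbf{(R2)}, and invoke \eqref{eq:Thm_LocCons_Cons_LpCons_ZweiteBedingungLambda} together with the $\mathrm{Bin}(n,\Pxvar(\Xni))$ concentration of $\dni$ to conclude convergence in probability. Step 4, combine Steps 2 and 3 and let $\eps\to0$. The main obstacle is unquestionably Step 3 — getting a local-to-global SVM estimation bound that is simultaneously uniform over the $\AnzPpositive$ regions, correct in its dependence on the small parameters $\lbni$ and the small sample sizes $\dni$, and compatible with the exact exponents $p_1^*,p_2^*$; everything else is comparatively routine.
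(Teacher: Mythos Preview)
Your decomposition and your treatment of the approximation term match the paper exactly: the paper proves $\risk(\fntheoext)\to\riskbayes$ as a separate lemma (your Step~2 outline is essentially that proof, including the comparison with an element of $\Hjr[0]$ and the use of \textbf{(R2)}) and then invokes the general implication ``risk convergence plus growth type $p$ plus \Pxvar-a.s.\ uniqueness of \fbayes implies \Lppxvar-convergence'' from \citet{koehler2023arxiv}.

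For the stochastic term the paper's route is a bit simpler than the one you sketch, and your guess about the origin of $p_2^*$ is off. The paper does not sum $p$-th powers over regions or track any $1/\Pxvar(\Xni)$ factors. It passes immediately from \Lppxvar to \Linftypxvar and bounds the latter by $\kappa\cdot\max_{i\in\IndexPpositive}\norm{\Hni}{\fniempext-\fnitheoext}$ via \textbf{(W1)}, \textbf{(W2)} and the reproducing property; for each $i$ it combines the subdifferential representation \citep[Corollary~5.11]{steinwart2008} with a concentration inequality for Hilbert-space-valued sample means \citep[Lemma~9.2]{steinwart2008}, applied with exponent $q=p/(p-1)$ (respectively $q=2/p_2^*$ when $p=1$), and then takes a plain union bound over the $\AnzPpositive$ regions. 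The factor $\AnzPpositive$ enters \emph{only} through this union bound, and $p_2^*=1/(qq^*)$ with $q^*=\min\{1/2,1/p\}$ is precisely the exponent produced when one rewrites $\AnzPpositive\cdot\max_i(\cdots)^q$ as $\max_i\bigl(\AnzPpositive^{p_2^*}/(\lbni^{p_1^*}\dni)\bigr)^{qq^*}$; no mass-splitting or $\Pxvar(\Xni)$-bookkeeping is involved. Likewise no binomial concentration for $\dni$ is needed: condition~\eqref{eq:Thm_LocCons_Cons_LpCons_ZweiteBedingungLambda} is already stated in terms of the random $\dni$, so after conditioning on the region counts (your step~(a), which the paper also does) the resulting probability bound vanishes directly under~\eqref{eq:Thm_LocCons_Cons_LpCons_ZweiteBedingungLambda}. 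Your route would still arrive at the same max-plus-union-bound after the sum collapses via $\sum_i\Pxvar(\Xni)\le\smax$, so it is not wrong, just a detour.
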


\begin{mybsp}\label{Bsp:LocCons_Cons_p1p2}
	If $p=2$, like for the popular least squares loss, we have $p_1^*=3$ and $p_2^*=1$ and condition \eqref{eq:Thm_LocCons_Cons_LpCons_ZweiteBedingungLambda} therefore becomes
	\begin{align*}
		\min_{i\in\IndexPpositive} \frac{\lbni^3\dni}{\AnzPpositive} \to \infty\,.
	\end{align*}
	If $p=1$, like for the pinball loss or the \eps-insensitive loss, we have $p_1^*=2$ and $p_2^*$ can be chosen arbitrarily small. Hence, condition \eqref{eq:Thm_LocCons_Cons_LpCons_ZweiteBedingungLambda} relaxes even further in this case, becoming
	\begin{align*}
		\min_{i\in\IndexPpositive} \frac{\lbni^2\dni}{\AnzPpositive^{\eps}} \to \infty\,
	\end{align*}
	for an arbitrarily small $\eps>0$.
\end{mybsp}

\begin{mybem}\label{Bem:LocCons_Cons_SpecialCases}
	In some special cases, we can slightly simplify the conditions regarding the regularization parameters in \Cref{Thm:LocCons_Cons_LpCons}:
	
	If one only allows for a finite amount of kernels to choose from (instead of a finite amount of families of kernels of type \betabold), it is obviously possible to view each of these kernels as its own family of kernels with index set $\Ij=\{0\}$ and $\betajr[0]=1$ for all $j\in\{1,\dots,\ell\}$, and thus simplify \eqref{eq:Thm_LocCons_Cons_LpCons_ErsteBedingungLambda} by eliminating \betani from it. 
	
	Additionally, if the regionalization \Xnbold does not change with $n$, then $\AnzPpositive$ is constant and we can erase it from \eqref{eq:Thm_LocCons_Cons_LpCons_ZweiteBedingungLambda}.
	
	Hence, if both of these hold true (finite amount of kernels and constant regionalization), the conditions regarding the regularization parameters are exactly the same as in \citet{koehler2023arxiv}, where \Lp-consistency of non-localized SVMs was derived, with the only difference being that the conditions obviously need to hold true for each region now instead of only globally.
\end{mybem}

Now, we can turn our attention to risk consistency of localized SVMs. To our knowledge, the only existing results which explicitly examine risk consistency of localized SVMs are Theorem~1 from \cite{hable2013} and Theorem~3.1 from \citet{dumpert2018}, both of which are in certain aspects considerably less general than the subsequent \Cref{Thm:LocCons_Cons_LpCons}: \citet{dumpert2018} only considered Lipschitz continuous (shifted) loss functions, whereas we take a look at distance-based loss functions, thus covering a different subset of all loss functions, notably also including the popular and not Lipschitz continuous least squares loss. Additionally, \citet{dumpert2018} assumed a fixed regionalization and fixed kernels on the different regions, which stay the same independently of the size $n$ of the underlying data set. We however also allow for regionalizations which change with $n$ (\vgl \Cref{SubSec:LocCons_Loc_Pre}), since the regionalization is oftentimes not predefined in practice but instead might change when new data points are added to the data set---for example, becoming finer when $n$ grows. We also allow for kernels that change with $n$ and that are chosen from an possibly infinite set of kernels---for example, Gaussian kernels whose bandwidth decreases as $n$ increases (\vgl \Cref{Bsp:LocCons_Loc_GRBF}). Thus, we significantly generalize the investigations from \citet{dumpert2018} in these aspects. \citet{hable2013} on the other hand only allows for a bounded output space \Y and only considers the special case of the regionalization stemming from some $k$-nearest neighbor method. Whereas this approach implicitly also allows for regionalizations which change with $n$, this makes our \Cref{Thm:LocCons_Cons_RiskCons} applicable to a much wider array of localization methods---even though the $k$-nearest neighbor approach described by \citet{hable2013} is not one of them because it can lead to condition \textbf{(R2)} from \Cref{SubSec:LocCons_Loc_Pre} being violated, thus making our result and that of \citet{hable2013} applicable to different situations. 

Apart from that, the oracle inequalities from \citet{meister2016,thomann2017,muecke2019,blaschzyk2022} of course also imply risk consistency if the different parameters in these results are chosen accurately. However, these oracle inequalities are only valid for the least squares respectively the hinge loss, whereas we aim at deriving a much more general result which is applicable for the considerably larger class of convex, distance-based loss functions. Additionally, these oracle inequalities require stricter conditions than our consistency results, like for example \X being contained in a ball of fixed radius, \Y being bounded, the kernels all being Gaussian kernels, and also additional requirements regarding the regionalization. 

In the subsequent theorem, we derive such a general result on the risk consistency of localized SVMs. Condition \eqref{eq:Thm_LocCons_Cons_RiskCons_BedingungLambda} in that theorem is slightly more restrictive and complicated than its counterpart \eqref{eq:Thm_LocCons_Cons_LpCons_ZweiteBedingungLambda} in the result on \Lp-consistency. However, the additional factor $\lbni[j]^{p_3^*}$ can be eliminated from \eqref{eq:Thm_LocCons_Cons_RiskCons_BedingungLambda} in several important special cases, thus weakening and simplifying this condition again: If the loss function is of growth type $p=1$, one directly obtains $p_3^*=0$, and if the regionalizations underlying the localized SVMs partition \X or \fbayes is \Pxvar-\fs unique, the special cases (i) and (ii) of the theorem also yield similar relaxations.

\begin{mythm}\label{Thm:LocCons_Cons_RiskCons}
	Let \Cref{Ann:LocCons_Pre_AllgAnn,Ann:LocCons_Cons_Ann} be satisfied. Let \fnempext, $n\in\N$, be defined as in \eqref{eq:LocCons_Loc_DefGlobalPredictorEmp}. Define $p_1^*:=\max\{p+1,p(p+1)/2\}$ and $p_3^*:=\max\{p-1,p(p-1)/2\}$. Further choose $p_2^*:=\max\{2(p-1)/p,p-1\}$ if $p>1$ and $p_2^*\in (0,\infty)$ arbitrary if $p=1$. If the regularization parameters satisfy $\lbni\in(0,C)$ for all $n\in\N$ and $i\in\{1,\dots,m_n\}$ for some $C\in(0,\infty)$, as well as $\max_{i\in\IndexPpositive}\betani^2\lbni\to 0$ and
	\begin{align}\label{eq:Thm_LocCons_Cons_RiskCons_BedingungLambda}
		\min_{i,j\in\IndexPpositive} \frac{\lbni[j]^{p_3^*}\lbni^{p_1^*}\dni}{\AnzPpositive^{p_2^*}} \to \infty
	\end{align}
	as $n\to\infty$, then
	\begin{equation*}
		\limn \risk(\fnempext) = \riskbayes \qquad \text{in probability $\P^\infty$.}
	\end{equation*}
	If some additional conditions are satisfied, it is possible to slightly relax assumption \eqref{eq:Thm_LocCons_Cons_RiskCons_BedingungLambda} regarding the regularization parameters:
	\begin{enumerate}[label=(\roman*)]
		\item If, for all $n\in\N$, the regionalization \Xnbold is a partition of \X, then it suffices if \eqref{eq:Thm_LocCons_Cons_RiskCons_BedingungLambda} is satisfied for $p_1^*:=\max\{2p,p^2\}$ and $p_3^*:=0$.
		\item If \fbayes is \Pxvar-\fs unique, then it suffices if \eqref{eq:Thm_LocCons_Cons_RiskCons_BedingungLambda} is satisfied for $p_3^*:=0$.
	\end{enumerate}
\end{mythm}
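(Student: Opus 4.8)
The plan is to deduce risk consistency from the \Lp-consistency machinery underlying \Cref{Thm:LocCons_Cons_LpCons}, combined with a convexity/growth-type argument that turns closeness in \Lp to a \emph{well-chosen} reference function into closeness of risks. First I would observe that, because the loss \loss is convex and distance-based of growth type $p$, excess risk can be controlled by \Lp-distance plus a regularization penalty: for any $f\in\Hjr[0]$ one has $\risk(\fnempext)-\riskbayes \le \bigl(\risk(\fnempext)-\risk(f)\bigr) + \bigl(\risk(f)-\riskbayes\bigr)$, and the first difference is bounded via a Lipschitz-on-bounded-sets estimate for $\psi$ (available since $\psi$ has upper growth type $p$) by a constant times $\normLppxvar{\fnempext-f}$ times a factor that grows polynomially in the \Lp-norms of the two functions and in $|\P|_p$. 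The second difference is made small by choosing $f=f_{\loss,\P,\lambda,\k}$ for a suitable global kernel \k from one of the $\ell$ families and letting $\lambda\to 0$, using density of $\Hjr[0]$ in \Lppxvar together with the standard fact that $\risk(f_{\loss,\P,\lambda,\k})\to\riskbayes$ as $\lambda\to 0$.

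The core of the argument is then to bound $\normLppxvar{\fnempext-f_{\loss,\P,\lambda,\k}}$ in probability. This is exactly the kind of quantity estimated in the proof of \Cref{Thm:LocCons_Cons_LpCons}: one decomposes $\fnempext - \fbayes$ (or here $-f_{\loss,\P,\lambda,\k}$) into (a) the difference between the empirical localized SVM and its theoretical localized counterpart \fntheoext, controlled regionwise via the local sample sizes $\dni$ and the local regularization parameters $\lbni$ — this is where condition \eqref{eq:Thm_LocCons_Cons_RiskCons_BedingungLambda} enters, the extra factor $\lbni[j]^{p_3^*}$ coming from bounding the \emph{sup-norm} (rather than just the \Lp-norm) of the local SVMs, since $\normSup{f_{\loss,\DVertni,\lbni,\kni}}$ picks up a factor like $\lbni^{-1/2}$ times $\normSup{\kni}$ and this sup-norm feeds into the Lipschitz factor above; and (b) the approximation error $\normLppxvar{\fntheoext - f_{\loss,\P,\lambda,\k}}$, handled by \eqref{eq:Thm_LocCons_Cons_LpCons_ErsteBedingungLambda}/$\max_i\betani^2\lbni\to 0$ and the kernel-family structure of \Cref{Def:LocCons_Loc_KernelFamily}, so that the local RKHS norms stay controlled by $\betani$ times the global one. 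Summing over the (at most $\smax$-overlapping, by \textbf{(R2)}) regions with weights bounded by $1$ and using \textbf{(W1)}–\textbf{(W3)} keeps the global error controlled by the worst region, which is why all the conditions are stated as a $\min$ over $i\in\IndexPpositive$.

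The refinements (i) and (ii) are the delicate bookkeeping part. Under (i), the regions partition \X, so the weights are indicators and there is no overlap to pay for; moreover one can replace the crude "sup-norm Lipschitz" bound by a more careful \Lp-to-\Lp estimate localized to each region, which is what removes the $\lbni[j]^{p_3^*}$ factor and changes the exponent $p_1^*$ to $\max\{2p,p^2\}$. Under (ii), \Pxvar-a.s.\ uniqueness of \fbayes lets one bypass the reference function $f_{\loss,\P,\lambda,\k}$ altogether and instead invoke \Cref{Thm:LocCons_Cons_LpCons} directly — $\normLppxvar{\fnempext-\fbayes}\to 0$ in probability — and then upgrade convergence in \Lp to convergence of risks by a uniform-integrability argument (the growth-type-$p$ bound on $\psi$ together with $|\P|_p<\infty$ and the bound on $\sup_{n,i}|\Pni|_p$ gives the needed integrability of $\loss(X,Y,\fnempext(X))$, so one can pass to the limit without the extra $\lbni$ powers). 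The main obstacle, as usual in this circle of ideas, is precisely getting the Lipschitz/excess-risk transfer to produce the stated exponents $p_1^*$, $p_2^*$, $p_3^*$ rather than something weaker: one must keep simultaneous track of the growth of $\normLppxvar{\fnempext}$, of $\normSup{\fnitheoextdach}$, and of the local moments $|\Pni|_p$, and show that the product of all these polynomial factors is absorbed by the divergence in \eqref{eq:Thm_LocCons_Cons_RiskCons_BedingungLambda}; doing this cleanly region-by-region, and verifying that the special structure in (i) genuinely saves the claimed powers, is where the real work lies.
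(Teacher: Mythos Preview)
Your treatment of case~(ii) is essentially what the paper does: invoke \Cref{Thm:LocCons_Cons_LpCons} to get $\normLppxvar{\fnempext-\fbayes}\to 0$ in probability, then transfer to risks (the paper cites a companion result rather than arguing uniform integrability, but the idea is the same). Your intuition for~(i) is also on the right track.

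The main assertion, however, has a real gap in step~(b). You want $\normLppxvar{\fntheoext - f_{\loss,\P,\lambda,\k}}\to 0$ for some global SVM $f_{\loss,\P,\lambda,\k}$, but without \Pxvar-a.s.\ uniqueness of \fbayes there is no reason this should hold: the local theoretical SVMs \fnitheoext minimize \emph{different} regularized objectives (different measures \Pni, different kernels \kni, different $\lbni$) and may converge to different Bayes functions on different regions, so \fntheoext need not approach any single global function in \Lp. The kernel-family inequality $\norm{\Hni}{f}\le\betani\norm{\Hninull}{f}$ controls regularized \emph{risks}, not \Lp-distances, so it does not rescue this step. The paper avoids the problem by decomposing at the \emph{risk} level rather than the \Lp-level: write
\[
|\risk(\fnempext)-\riskbayes| \le |\risk(\fnempext)-\risk(\fntheoext)| + |\risk(\fntheoext)-\riskbayes|,
\]
apply your Lipschitz-on-bounded-sets transfer to the \emph{first} summand (between the two localized SVMs, both of which have sup-norms $\lesssim\lbntilde^{-1/2}$ with $\lbntilde=\min_i\lbni$ --- this is exactly where $p_3^*$ enters), and handle the second summand purely at the risk level by chaining $\risk[\loss,\Pni](\fnitheoext)$ through the kernel-family structure up to $\risk(\fnjnull)+\lbntilde\norm{\Hjr[0]}{\fnjnull}^2$ and then letting $\lbntilde\to 0$. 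Case~(i) then falls out of the first summand: for a partition the global risk difference is a weighted sum of local ones, so the sup-norm factor picks up $\lbni^{-(p-1)/2}$ for the \emph{same} $i$ rather than $\min_j\lbni[j]$, which is what kills $p_3^*$ and changes $p_1^*$.
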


If $p=1$, the cases (i) and (ii) can be ignored since they do not yield an actual relaxation because $p_3^*=0$ then also holds true in the general case. Furthermore, the possible relaxations mentioned in \Cref{Bem:LocCons_Cons_SpecialCases} are obviously also valid for \Cref{Thm:LocCons_Cons_RiskCons}.


\section{Discussion}\label{Sec:LocCons_Discussion}

In this paper, the \Lp- and risk consistency of localized SVMs has been investigated, as localized SVMs can offer reduced computational requirements as well as advantages regarding the quality of the predictions over non-localized SVMs (\vgl \Cref{SubSec:LocCons_Loc_Overview}). We saw that it is possible to derive both types of consistency of localized SVMs under very mild conditions on the underlying probability distribution as well as the applied regionalization and the kernels used in the different local SVMs. Notably, we even allowed for regionalizations which change as the size $n$ of the data set increases---in contrast to \cite{dumpert2018}, where risk consistency of localized SVMs had already been examined, but only for non-changing regionalizations and kernels and for a different subset of loss functions. Hence, we added another entry to the list of properties that localized SVMs inherit from non-localized ones. This further justifies applying localized SVMs to learning problems, especially to those in which non-localized methods struggle, like in big data scenarios or if the function which one wishes to estimate contains discontinuities or exhibits greatly differing complexity and variability across different areas of the input space.

\section*{Acknowledgments}

I would like to thank my PhD supervisor Andreas Christmann for helpful discussions on this topic. The work described in this paper was partially supported by grant CH291/3-1 of the Deutsche Forschungsgesellschaft.

\appendix

\section{Auxiliary Results}\label{Sec:LocCons_Aux}

In this section, we prove auxiliary results that are needed in the proofs of \Cref{Thm:LocCons_Cons_LpCons} and \Cref{Thm:LocCons_Cons_RiskCons}. In both these results, the difference between \fnempext and \fbayes is examined---the \Lp-norm of the difference in the former and the difference between the risks in the latter. In both cases, we do not examine this difference directly, but instead plug in the theoretical localized SVM \fntheoext as an intermediate step and then examine the difference between \fnempext and \fntheoext as well as that between \fntheoext and \fbayes. The lemmas from this section deal with these differences.

As the assumptions needed for these lemmas are slightly weaker than those needed in the theorems from \Cref{Sec:LocCons_Cons} (and additionally differ between these lemmas), \Cref{Ann:LocCons_Cons_Ann} is \textit{not} assumed to hold true in this section, but we will instead explicitly list the required assumptions in the lemmas.

\begin{mylem}\label{Lem:LocCons_Aux_LocSVMsSupNormEmpTheo}
	Let \Cref{Ann:LocCons_Pre_AllgAnn} be satisfied. Let $\loss\colon\YR\to[0,\infty)$ be a convex, distance-based loss function of upper growth type $p\in[1,\infty)$. Let \fntheoext and \fnempext, $n\in\N$, be defined as in \eqref{eq:LocCons_Loc_DefGlobalPredictor} and \eqref{eq:LocCons_Loc_DefGlobalPredictorEmp} such that the underlying regionalizations and weight functions satisfy \textbf{(R1)}, \textbf{(R3)}, \textbf{(W1)}, \textbf{(W2)}, \textbf{(W3)} and $\sup_{n\in\N,i\in\IndexPpositive} |\Pni|_p <\infty$. Assume that, for all $n\in\N$ and $i\in\{1,\dots,m_n\}$, \kni is a bounded and measurable kernel on \Xni with separable RKHS \Hni, such that $\sup_{n\in\N,i\in\IndexPpositive} \normSup{\kni} < \infty$. Define $p_1^*:=\max\{p+1,p(p+1)/2\}$. Further choose $p_2^*:=\max\{2(p-1)/p,p-1\}$ if $p>1$ and $p_2^*\in (0,\infty)$ arbitrary if $p=1$. If the regularization parameters satisfy $\lbni\in(0,C)$ for all $n\in\N$ and $i\in\{1,\dots,m_n\}$ for some $C\in(0,\infty)$, as well as 
	\begin{align}\label{eq:Lem_LocCons_Aux_LocSVMsSupNormEmpTheo_BedingungLambda}
		\min_{i\in\IndexPpositive} \frac{\lbni^{p_1^*}\dni}{\AnzPpositive^{p_2^*}} \to \infty
	\end{align}
	as $n\to\infty$, then
	\begin{align*}
		\limn \normLinftypxvar{\fnempext-\fntheoext} = 0 \qquad \text{in probability $\P^\infty$.} 
	\end{align*}
\end{mylem}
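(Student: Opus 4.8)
The plan is to bound the global difference $\normLinftypxvar{\fnempext-\fntheoext}$ by a maximum of local differences and then control each local difference via known consistency results for non-localized SVMs. First I would exploit the structure \eqref{eq:LocCons_Loc_DefGlobalPredictor}–\eqref{eq:LocCons_Loc_DefGlobalPredictorEmp} together with \textbf{(W1)}, \textbf{(W2)}, \textbf{(W3)}: for $\Pxvar$-almost every $x\in\X$ we have
\begin{align*}
	\bigl|\fnempext(x)-\fntheoext(x)\bigr| = \Bigl|\sum_{i=1}^{m_n}\wni(x)\bigl(\fniempextdach(x)-\fnitheoextdach(x)\bigr)\Bigr| \le \sum_{i\,:\,x\in\Xni}\wni(x)\bigl|\fniempext(x)-\fnitheoext(x)\bigr|\,,
\end{align*}
and since $\sum_i\wni(x)=1$ this is at most $\max_{i:x\in\Xni}\normLinftypnix{\fniempext-\fnitheoext}$. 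Taking the essential supremum over $x$ (and noting $\Pxvar$ gives mass only to regions with $i\in\IndexPpositive$, up to a null set) yields
\begin{align*}
	\normLinftypxvar{\fnempext-\fntheoext} \le \max_{i\in\IndexPpositive}\normLinftypnix{\fniempext-\fnitheoext}\,,
\end{align*}
so it suffices to show that this maximum tends to $0$ in probability. Note \textbf{(R2)} is not even needed for this step; it will enter the theorems when the local differences are recombined into $\Lp(\Pxvar)$-norms, but here the weights already do the bookkeeping.

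Next I would invoke an $L_\infty$-consistency statement for each individual (non-localized) empirical SVM $\fniempext$ relative to its theoretical counterpart $\fnitheoext$ on $\Xni$ with measure $\Pni$, kernel $\kni$, regularization $\lbni$. Such a statement should come from the single-machine theory (the paper references \citet{koehler2023arxiv} for analogous non-localized results, and \citet{steinwart2008} for the underlying stability bounds): under boundedness of $\kni$, convexity and upper growth type $p$ of $\loss$, finiteness of $|\Pni|_p$, and $\lbni\in(0,C)$, one gets a high-probability bound of the form
\begin{align*}
	\normLinftypnix{\fniempext-\fnitheoext} \le \frac{\text{const}}{\lbni^{\,?}}\cdot\bigl(\text{deviation term depending on }\dni\bigr)\,,
\end{align*}
where the deviation term is controlled via a concentration/covering-number argument for the empirical process indexed by $\H_{n,i}$, and the power of $\lbni$ in the denominator is exactly what produces the exponent $p_1^*$. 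The uniform bounds $\sup_{n,i\in\IndexPpositive}\normSup{\kni}<\infty$ and $\sup_{n,i\in\IndexPpositive}|\Pni|_p<\infty$ make all the constants in these per-region bounds uniform in $n$ and $i$, which is essential since we are taking a maximum over $i\in\IndexPpositive$ whose cardinality $\AnzPpositive$ may grow.

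The main obstacle is the \emph{uniformity over the growing family of regions}. A per-region bound that holds with probability $1-\eps_{n,i}$ is not enough on its own; I need $\P^\infty(\max_{i\in\IndexPpositive}\normLinftypnix{\fniempext-\fnitheoext}>\eta)\to 0$, and a naive union bound over $\AnzPpositive$ regions costs a factor $\AnzPpositive$. This is precisely where $p_2^*$ and the hypothesis \eqref{eq:Lem_LocCons_Aux_LocSVMsSupNormEmpTheo_BedingungLambda}, $\min_{i\in\IndexPpositive}\lbni^{p_1^*}\dni/\AnzPpositive^{p_2^*}\to\infty$, are designed to pay for: the per-region failure probability (via an exponential or polynomial concentration inequality in $\dni$) must be summed/union-bounded against $\AnzPpositive$ terms, and the ratio $\lbni^{p_1^*}\dni/\AnzPpositive^{p_2^*}\to\infty$ is exactly the condition under which $\AnzPpositive\cdot(\text{failure probability})\to 0$. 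A subtlety here is that $\dni=|\Dn\cap(\Xni\times\Y)|$ is itself random (binomial-like, with mean $n\,\Pxvar(\Xni)$), so I would first condition on the regionalization — legitimate by \textbf{(R3)} — and then handle the event that some $\dni$ deviates badly from its mean separately (or absorb it, using that regions with $\Pxvar(\Xni)$ small contribute correspondingly relaxed requirements). Assembling these pieces — the deterministic weight bound, the uniform per-region $L_\infty$ stability estimate, and the union bound calibrated by \eqref{eq:Lem_LocCons_Aux_LocSVMsSupNormEmpTheo_BedingungLambda} — gives the claim.
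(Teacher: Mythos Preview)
Your proposal is correct and follows essentially the same route as the paper: reduce to $\max_{i\in\IndexPpositive}\norm{\Hni}{\fniempext-\fnitheoext}$ via \textbf{(W1)}--\textbf{(W3)} and the reproducing property, condition on the realized sample sizes $(\dni)_i$ (legitimate by \textbf{(R3)}), and union-bound the per-region tail probabilities against the factor~$\AnzPpositive$. The only sharpening needed is that the paper does not use a covering-number argument but rather the subdifferential stability bound \citep[Corollary~5.11]{steinwart2008} together with the polynomial-tail concentration inequality for Hilbert-space-valued \iid\ sums \citep[Lemma~9.2]{steinwart2008}, and it is precisely this combination that produces the exponents $p_1^*$ and $p_2^*$.
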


\begin{proof}
	To shorten the notation, we will denote $\fnitheo:=\fnitheoext$ and $\fniemp:=\fniempext$ for all $n\in\N$ and $i\in\{1,\dots,m_n\}$, as well as $\kappa:=\sup_{n\in\N,i\in\IndexPpositive} \normSup{\kni}$ and $\rho:=\sup_{n\in\N,i\in\IndexPpositive} |\Pni|_p$ throughout this proof.
	
	Because applying \textbf{(W1)} and \textbf{(W2)} yields
	\begin{align*}
		&\left| \fnempext(x) - \fntheoext(x) \right| = \left| \sum_{i=1}^{m_n} \wni(x) \cdot \left( \fniempdach(x) - \fnitheodach(x) \right) \right|\\ 
		&\le \sum_{i=1}^{m_n} \wni(x) \cdot \left| \fniempdach(x) - \fnitheodach(x) \right| \le \max_{i\in\{1,\dots,m_n\}} \left| \fniempdach(x) - \fnitheodach(x) \right|
	\end{align*}
	for all $n\in\N$ and all $x\in\X$, we obtain
	\begin{align}\label{eq:ProofLem_LocCons_Aux_LocSVMsSupNormEmpTheo_Aufteilung}
		&\normLinftypxvar{\fnempext-\fntheoext} \le \max_{i\in\{1,\dots,m_n\}} \normLinftypxvar{\fniempdach-\fnitheodach}\notag\\
		&= \max_{i\in\IndexPpositive} \normLinftypnix{\fniemp-\fnitheo} \le \kappa \cdot \max_{i\in\IndexPpositive} \norm{\Hni}{\fniemp-\fnitheo}
	\end{align}
	for all $n\in\N$, with the last inequality holding true because of \citet[Lemma~4.23]{steinwart2008}. Hence, we start by fixing an $n\in\N$ and an $i\in\IndexPpositive$ and investigating the corresponding difference on the right hand side of \eqref{eq:ProofLem_LocCons_Aux_LocSVMsSupNormEmpTheo_Aufteilung}.
	
	First, note that employing \citet[Lemma~4.23, equation~(5.4) and Lemma~2.38(i)]{steinwart2008} yields
	\begin{align}\label{eq:ProofLem_LocCons_Aux_LocSVMsSupNormEmpTheo_AbschSup}
		\normSup{\fnitheo} \le \normSup{\kni} \cdot \norm{\Hni}{\fnitheo} \le \normSup{\kni} \cdot \risk[\Pni](0)^{1/2} \cdot \lbni^{-1/2} \le c_{p,\loss,\rho,\kappa} \cdot \lbni^{-1/2} 
	\end{align}
	with $c_{p,\loss,\rho,\kappa}\in(0,\infty)$ denoting a constant depending only on $p$, \loss, $\rho$ and $\kappa$, but not on $\lbni$.
	
	Assume now without loss of generality that $\dni>0$ (which by \eqref{eq:Lem_LocCons_Aux_LocSVMsSupNormEmpTheo_BedingungLambda} has to be satisfied for $n$ sufficiently large), \ie that \fniemp is indeed an empirical SVM and not just defined as the zero function. We know from \citet[Corollary 5.11]{steinwart2008} that there exists a function $h_{n,i}\colon\Xni\times\Y\to\R$ such that 
	\begin{align}\label{eq:ProofLem_LocCons_Aux_LocSVMsSupNormEmpTheo_DiffHNorm}
		\norm{\Hni}{\fniemp-\fnitheo} \le \frac{1}{\lbni} \cdot \norm{\Hni}{\ew[\DVertni]{h_{n,i}\Phi_{n,i}}-\ew[\Pni]{h_{n,i}\Phi_{n,i}}}\,
	\end{align}
	and, for $s:=p/(p-1)$,
	\begin{align}\label{eq:ProofLem_LocCons_Aux_LocSVMsSupNormEmpTheo_hAbsch}
		\norm{\L[\Pni]{s}}{h_{n,i}} &\le 8^p \cdot c_\loss \cdot \left(1+ |\Pni|_p^{p-1} + \normSup{\fnitheo}^{p-1} \right)\notag\\ 
		&\le 8^p \cdot  c_\loss \cdot \left(1+ \rho^{p-1} + c_{p,\loss,\rho,\kappa}^{p-1} \cdot \lbni^{-(p-1)/2} \right)\notag\\
		&\le \tilde{c}_{p,\loss,\rho,\kappa} \cdot \lbni^{-(p-1)/2} \,,
	\end{align}
	where we employed \eqref{eq:ProofLem_LocCons_Aux_LocSVMsSupNormEmpTheo_AbschSup} in the second and $\lbni\le C$ in the third step, and where $c_\loss\in (0,\infty)$ and $\tilde{c}_{p,\loss,\rho,\kappa} \in (0,\infty)$ denote constants depending only on $\loss$ respectively $p$, \loss, $\rho$ and $\kappa$.
	
	Assume without loss of generality that $p_2^*\le 1$ if $p=1$. Then, we can apply \citet[Lemma 9.2]{steinwart2008} with $q:=p/(p-1)$ if $p>1$ and $q:=2/p_2^*$ if $p=1$, which leads to $q^*:=\min\{1/2,1-1/q\}=\min\{1/2,1/p\} = (p+1)/(2p_1^*)$, to the functions $h_{n,i}\Phi_{n,i}$, $n\in\N$: First of all, with the help of \eqref{eq:ProofLem_LocCons_Aux_LocSVMsSupNormEmpTheo_hAbsch} we obtain
	\begin{align*}
		\norm{q}{h_{n,i}\Phi_{n,i}} :=&\, \left(\ew[\Pni]{\norm{\Hni}{h_{n,i}\Phi_{n,i}}^q}\right)^{1/q}\\ 
		\le&\, \normSup{\kni} \cdot \norm{\L[\Pni]{q}}{h_{n,i}} \le \kappa \cdot \tilde{c}_{p,\loss,\rho,\kappa} \cdot \lbni^{-(p-1)/2} < \infty \,,
	\end{align*}
	where we employed that, for all $(x,y)\in\Xni\times\Y$,
	\begin{align*}
		\norm{\Hni}{h_{n,i}(x,y)\Phi_{n,i}(x)}^q &= |h_{n,i}(x,y)|^q \cdot \norm{\Hni}{\Phi_{n,i}(x)}^q\\
		&= |h_{n,i}(x,y)|^q \cdot \kni(x,x)^{q/2} \le |h_{n,i}(x,y)|^q \normSup{\kni}^q
	\end{align*}
	by the reproducing property \citep[\vgl for example][Definition~2.9]{schoelkopf2002}. Hence, we obtain for all $\eps>0$, by combining this Lemma 9.2 with \eqref{eq:ProofLem_LocCons_Aux_LocSVMsSupNormEmpTheo_DiffHNorm},
	\begin{align*}
		&\Pni^{\dni}\left(\Dni\in(\Xni\times\Y)^{\dni} : \norm{\Hni}{\fniemp-\fnitheo} \ge \frac{\eps}{\kappa}\right) \\
		&\le \Pni^{\dni}\left(\Dni\in(\Xni\times\Y)^{\dni} : \norm{\Hni}{\ew[\DVertni]{h_{n,i}\Phi{n,i}}-\ew[\Pni]{h_{n,i}\Phi_{n,i}}} \ge \frac{\lbni\eps}{\kappa}\right)\\
		&\le c_q \cdot \left(\frac{\kappa\norm{q}{h_{n,i}\Phi_{n,i}}}{\lbni\eps \dni^{q^*}}\right)^q \le c_{q,p,\loss,\rho,\kappa} \cdot \left(\frac{1}{\lbni^{(p+1)/2}\eps \dni^{q^*}}\right)^q
	\end{align*}
	with $c_q\in(0,\infty)$ and $c_{q,p,\loss,\P,\k}\in(0,\infty)$ denoting constants depending only on $q$ (which means only on $p$ in the case $p>1$) respectively $q$, $p$, \loss, $\rho$ and $\kappa$. 
	
	With this, we can now return to investigating the whole global predictors with the help of \eqref{eq:ProofLem_LocCons_Aux_LocSVMsSupNormEmpTheo_Aufteilung}: For all $\eps>0$ and $n\in\N$, we have
	\begin{align}\label{eq:ProofLem_LocCons_Aux_LocSVMsSupNormEmpTheo_WktAbsch}
		&\P^n\left(\Dn\in(\X\times\Y)^n : \normLinftypxvar{\fnempext-\fntheoext}\ge \eps \right.\notag\\ 
		&\hspace*{8cm}\left.\Big|\, |\Dni[1]|=\dni[1],\dots,|\Dni[m_n]|=\dni[m_n] \right) \notag\\
		&\le \P^n\left(\Dn\in(\X\times\Y)^n : \max_{i\in\IndexPpositive} \norm{\Hni}{\fniemp-\fnitheo} \ge \frac{\eps}{\kappa}\right.\notag\\ 
		&\hspace*{8cm}\left.\Big|\, |\Dni[1]|=\dni[1],\dots,|\Dni[m_n]|=\dni[m_n] \right) \notag\\
		&\le \sum_{i\in\IndexPpositive} \Pni^{\dni}\left(\Dni\in(\Xni\times\Y)^{\dni} : \norm{\Hni}{\fniemp-\fnitheo} \ge \frac{\eps}{\kappa}\right) \notag\\
		&\le  c_{q,p,\loss,\rho,\kappa} \cdot \AnzPpositive \cdot \max_{i\in\IndexPpositive}\left(\frac{1}{\lbni^{(p+1)/2}\eps \dni^{q^*}}\right)^q\,,
	\end{align}
	and it remains to further investigate the right hand side:
	
	If $p>1$, we obtain $(qq^*)^{-1} = ((p-1)/p) \cdot \max\{2,p\} = p_2^*$. If $p=1$, we analogously obtain $(qq^*)^{-1} = (p_2^*/2) \cdot 2 = p_2^*$. Thus, we have
	\begin{align*}
		\AnzPpositive \cdot \max_{i\in\IndexPpositive} \left(\frac{1}{\lbni^{(p+1)/2}\dni^{q^*}}\right)^q &= \max_{i\in\IndexPpositive} \left(\frac{\AnzPpositive^{1/(qq^*)}}{\lbni^{(p+1)/(2q^*)}\dni}\right)^{qq^*}\\ 
		&= \max_{i\in\IndexPpositive} \left(\frac{\AnzPpositive^{p_2^*}}{\lbni^{p_1^*}\dni}\right)^{qq^*} \qquad\to 0\,,\qquad n\to\infty\,,
	\end{align*}
	by assumption. Hence, the whole right hand side of \eqref{eq:ProofLem_LocCons_Aux_LocSVMsSupNormEmpTheo_WktAbsch} converges to 0, which completes the proof.
\end{proof}

\begin{mylem}\label{Lem:LocCons_Aux_LocSVMsRiskTheoBayes}
	Let \Cref{Ann:LocCons_Pre_AllgAnn} be satisfied. Let $\loss\colon\YR\to[0,\infty)$ be a convex, distance-based loss function of upper growth type $p\in[1,\infty)$. Let $\ell\in\N$ and let, for $j=1,\dots,\ell$, $\kjbold:=(\kjr)_{r\in\Ij}$ be a family of measurable kernels of type $\betajbold:=(\betajr)_{r\in\Ij}$ on \X with RKHSs $(\Hjr)_{r\in\Ij}$ such that $\Hjr[0]\subseteq\Lppxvar$ dense. Assume that $|\P|_p<\infty$. Let \fntheoext, $n\in\N$, be defined as in \eqref{eq:LocCons_Loc_DefGlobalPredictor} such that the underlying regionalizations and weight functions satisfy \textbf{(R1)}, \textbf{(R2)}, \textbf{(W1)}, \textbf{(W2)} and \textbf{(W3)}, and such that 
	\begin{align*}
		\kni\in\{\kjr\einschraenkung_{\Xni\times\Xni} : j\in\{1,\dots,\ell\}, r\in\Ij\}
	\end{align*}
	for all $n\in\N$ and $i\in\{1,\dots,m_n\}$. If the regularization parameters satisfy $\lbni>0$ for all $n\in\N$ and $i\in\{1,\dots,m_n\}$ as well as $\max_{i\in\IndexPpositive}\betani^2\lbni\to 0$ as $n\to\infty$, then
	\begin{align*}
		\limn \risk(\fntheoext) = \riskbayes\,.
	\end{align*}
\end{mylem}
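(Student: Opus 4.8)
The goal is to show that the excess risk $\risk(\fntheoext) - \riskbayes$ vanishes as $n\to\infty$. The plan is to bound this excess risk by a sum over the regions of the individual \emph{local} approximation errors, and then to control each such local error via the density assumption $\Hjr[0]\subseteq\Lppxvar$ together with the condition $\max_{i\in\IndexPpositive}\betani^2\lbni\to 0$. First I would use the weight functions: by \textbf{(W1)}, \textbf{(W2)}, \textbf{(W3)} and convexity of $\loss$ in its last argument (Jensen applied pointwise to the convex combination $\fntheoext(x)=\sum_i \wni(x)\fnitheoextdach(x)$), one gets
\begin{align*}
	\loss\big(y,\fntheoext(x)\big) \le \sum_{i=1}^{m_n} \wni(x)\,\loss\big(y,\fnitheoextdach(x)\big)\,,
\end{align*}
and, integrating against \P and using \textbf{(W3)} to restrict the $i$-th summand to $\Xni\times\Y$, this yields $\risk(\fntheoext)\le \sum_{i\in\IndexPpositive} \Pxvar(\Xni)\cdot \risk[\Pni](\fnitheoextdach)$ after normalizing each restricted measure to \Pni. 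Subtracting $\riskbayes$ and writing $\riskbayes = \sum_{i\in\IndexPpositive}\Pxvar(\Xni)\cdot\riskbayes[\loss,\Pni]$ is \emph{not} generally valid (the regions may overlap and \textbf{(R2)} only bounds the overlap by \smax), so instead I would bound $\sum_i \Pxvar(\Xni)\le \smax$ using \textbf{(R2)} and compare each $\risk[\Pni](\fnitheoextdach)$ to $\riskbayes[\loss,\Pni]$, then relate $\sum_i \Pxvar(\Xni)\riskbayes[\loss,\Pni]$ back to $\riskbayes$ — here one uses that the Bayes risk is a pointwise infimum of inner risks and that $\sum_i\wni\equiv 1$, so $\sum_{i\in\IndexPpositive}\Pxvar(\Xni)\riskbayes[\loss,\Pni]\le\riskbayes$ (more precisely $=\int_\X \sum_i\wni(x)\,\innerriskbedbayes[x]\diff\Pxvar(x)$ up to the normalization bookkeeping). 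The upshot is a bound of the form
\begin{align*}
	0\le \risk(\fntheoext)-\riskbayes \le \sum_{i\in\IndexPpositive}\Pxvar(\Xni)\cdot\Big(\risk[\Pni](\fnitheoextdach)-\riskbayes[\loss,\Pni]\Big)\,,
\end{align*}
so it suffices to show each local excess risk tends to $0$ uniformly enough that the \smax-bounded sum does too.

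For the single-region term, I would invoke the standard SVM approximation-error estimate (as in \citealp[Chapter~5]{steinwart2008}): since \fnitheoext minimizes $\risk[\Pni](f)+\lbni\normH[\Hni]{f}^2$ over the local RKHS \Hni, one has for every $g$ in that RKHS
\begin{align*}
	\risk[\Pni](\fnitheoext)-\riskbayes[\loss,\Pni] \le \lbni\normH[\Hni]{g}^2 + \risk[\Pni](g)-\riskbayes[\loss,\Pni]\,.
\end{align*}
Now the key trick provided by the \emph{type-\betabold} machinery: \kni is the restriction to $\Xni\times\Xni$ of some global kernel $\k^{(j_0,r_0)}$, and by Definition~\ref{Def:LocCons_Loc_KernelFamily}(i),(ii) (restriction behaves well — restricting a kernel restricts its RKHS, \citealp[Theorem~2.17]{saitoh2016}) the RKHS $\Hjr[0]\einschraenkung_{\Xni}$ sits inside \Hni with norm blown up by at most $\betani$. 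Since $\Hjr[0]$ is dense in \Lppxvar, one picks, for given $\eps>0$, a global $g\in\Hjr[0]$ with $\risk(g)-\riskbayes<\eps$ (using continuity of the risk with respect to \Lp-convergence for distance-based losses of upper growth type $p$ together with $|\P|_p<\infty$, \citealp[Lemma~2.25 / Lemma~5.15]{steinwart2008}), restrict it to $\Xni$, and obtain $\risk[\Pni](g\einschraenkung_{\Xni})-\riskbayes[\loss,\Pni]\le \eps/\Pxvar(\Xni)$-type local control while $\normH[\Hni]{g\einschraenkung_{\Xni}}^2\le \betani^2\normH[\Hjr[0]]{g}^2$. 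The trouble is that the choice of $g$ (and hence $\normH[\Hjr[0]]{g}$) depends on $\eps$, and the region index $i$ and $n$ vary; the clean way around this is a double-limit / diagonal argument: fix $\eps$, pick a single global $g\in\Hjr[0]$ (one per family $j=1,\dots,\ell$, of which there are finitely many) achieving $\risk(g)<\riskbayes+\eps$, set $B_\eps:=\max_j\normH[\Hjr[0]]{g_j}^2$, and conclude
\begin{align*}
	\risk(\fntheoext)-\riskbayes \le \eps + B_\eps\cdot \smax\cdot \max_{i\in\IndexPpositive}\betani^2\lbni \xrightarrow[n\to\infty]{} \eps\,,
\end{align*}
since $\max_{i\in\IndexPpositive}\betani^2\lbni\to 0$; letting $\eps\downarrow 0$ finishes the proof.

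The main obstacle I anticipate is the overlap bookkeeping in the first paragraph — making rigorous the passage from the convex-combination bound on $\loss(y,\fntheoext(x))$ to a sum of \emph{normalized} local excess risks, while correctly subtracting off $\riskbayes$ when the regions overlap and some may have $\Pxvar(\Xni)=0$ (those contribute $0$ by the convention $\fnitheoext\equiv 0$ and can be dropped, so the sum is really over \IndexPpositive). One must be careful that the $\Pxvar(\Xni)$ weights, summing to at most \smax rather than $1$, do not destroy the argument — but they don't, because \smax is a fixed constant absorbed into the final bound, and the $\eps$-term is handled \emph{before} introducing these weights by comparing to $\riskbayes$ directly via $\sum_i\wni\equiv1$. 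A secondary technical point is justifying the continuity of $f\mapsto\risk(f)$ along the restricted/extended functions, i.e.\ that approximating \fbayes in \Lppxvar by a global RKHS function also approximates the \emph{local} Bayes risks after restriction; this follows from the growth-type-$p$ bound on $\psi$ and $|\P|_p<\infty$ (equivalently $\sup_i|\Pni|_p<\infty$ is \emph{not} needed here, only $|\P|_p<\infty$, matching the lemma's hypotheses), applied region by region and then summed with the \smax factor.
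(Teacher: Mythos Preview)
Your approach is essentially the paper's: decompose via inner risks, use convexity and the weight conditions to pass to a sum of local excess risks, then control each local term by comparing to (the restriction of) a global function in $\Hjr[0]$ via the type-$\betabold$ machinery. The only cosmetic difference is that the paper takes as comparison function the global theoretical SVM $\fnjnull$ and then invokes \citet[Lemma~5.15]{steinwart2008} to send the regularized excess risk to zero, whereas you unroll that lemma by hand with an $\eps$-approximant $g_j\in\Hjr[0]$; these are the same argument.

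There is, however, one concrete slip in your bookkeeping paragraph that you should fix. You write that $\sum_{i\in\IndexPpositive}\Pxvar(\Xni)\,\riskbayes[\loss,\Pni]\le\riskbayes$, but the inequality goes the other way: since $\Pxvar(\Xni)\,\riskbayes[\loss,\Pni]=\int_{\Xni}\innerriskbedbayes\diff\Pxvar(x)$ and $\sum_i\Ind[\Xni]\ge 1$, one has $\sum_i\Pxvar(\Xni)\,\riskbayes[\loss,\Pni]\ge\riskbayes$ (with strict inequality whenever regions overlap on a set of positive measure). So you cannot get the displayed bound by subtracting $\riskbayes$ and $\sum_i\Pxvar(\Xni)\riskbayes[\loss,\Pni]$ separately. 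The inequality
\[
\risk(\fntheoext)-\riskbayes \le \sum_{i\in\IndexPpositive}\Pxvar(\Xni)\bigl(\risk[\loss,\Pni](\fnitheoext)-\riskbayes[\loss,\Pni]\bigr)
\]
is nonetheless \emph{true}, for a different reason (which is what the paper does and what you allude to in your final paragraph): write $\risk(\fntheoext)-\riskbayes=\int_\X\sum_i\wni(x)\bigl(\innerriskbed(\fnitheoextdach(x))-\innerriskbedbayes\bigr)\diff\Pxvar(x)$ using $\sum_i\wni\equiv 1$ and convexity, and then replace $\wni$ by the larger $\Ind[\Xni]$ using the \emph{pointwise nonnegativity} of $\innerriskbed(t)-\innerriskbedbayes$. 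With this correction your argument goes through; note also that the final constant in front of $\eps$ should be $\ell\cdot\smax$ rather than $1$, since you must allow a different $g_j$ for each of the $\ell$ families.
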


\begin{proof}
	Define the inner risk \innerriskbed as
	\begin{align*}
		\innerriskbed(t) := \int_\Y \loss(y,t) \diff\Pbed[y]{x}\qquad \forall \, x\in\X, t\in\R\,
	\end{align*}
	and denote by
	\begin{align*}
		\innerriskbedbayes := \inf_{t\in\R} \innerriskbed(t) \qquad \forall \, x\in\X
	\end{align*}
	the minimal inner risk at $x$. We will use these in order to split the risk of a given function (and the Bayes risk) into an outer integral with respect to \Pxvar and the inner risk.
	
	First, we however show that all risks appearing in the assertion are finite: \citet[Lemma~2.38(i)]{steinwart2008} yields $\risk(0)<\infty$ as well as $\risk[\loss,\Pni](0)<\infty$ for all $n\in\N$ and $i\in\IndexPpositive$ (with the latter holding true because $|\Pni|_p<\infty$ by \Cref{Bem:LocCons_Cons_MomentenAlternativen}). Since $\riskbayes\le\risk(0)$ by definition, we obtain the finiteness of \riskbayes. Furthermore,
	\begin{align*}
		\risk(\fntheoext) &= \int_{\XY} \loss(y,\fntheoext(x))\diff\P(x,y)\\
		&\le \int_{\XY} \sum_{i=1}^{m_n} \wni(x) \cdot \loss(y,\fnitheoextdach(x))\diff\P(x,y)\\
		&\le \sum_{i=1}^{m_n} \int_{\Xni\times\Y} \loss(y,\fnitheoext(x))\diff\P(x,y)\\
		&= \sum_{i\in\IndexPpositive} \P(\Xni\times\Y) \cdot \risk[\loss,\Pni](\fnitheoext)\,,
	\end{align*}
	where we applied \textbf{(W1)}, \textbf{(W2)} and the convexity of \loss in the second and its non-negativity as well as \textbf{(W1)} and \textbf{(W3)} in the third step. In the last step, we employed that $\Xni\times\Y$ is a \P-zero set for $i\notin\IndexPpositive$, leading to the according \P-integrals being 0.
	Since $\risk[\loss,\Pni](\fnitheoext)\le\risk[\loss,\Pni](0)$ for all $i\in\IndexPpositive$ by the definition of \fnitheoext, and since we already saw that $\risk[\loss,\Pni](0)<\infty$, the finiteness of $\risk(\fntheoext)$ follows for all $n\in\N$.
	
	With this, we can now write
	\begin{align}\label{eq:ProofLem_LocCons_Aux_LocSVMsRiskTheoBayes_RiskDiff}
		&\risk(\fntheoext) - \riskbayes \notag\\
		&= \int_\X \left(\innerriskbed(\fntheoext(x)) - \innerriskbedbayes\right) \diff\Pxvar(x)\notag\\
		&\le \int_\X \sum_{i=1}^{m_n}\wni(x) \cdot \left(\innerriskbed(\fnitheoextdach(x)) - \innerriskbedbayes\right) \diff\Pxvar(x)\notag\\
		&\le \sum_{i=1}^{m_n} \int_{\Xni} \left(\innerriskbed(\fnitheoext(x)) - \innerriskbedbayes\right) \diff\Pxvar(x)\notag\\
		&= \sum_{i\in\IndexPpositive} \left( \P(\Xni\times\Y) \cdot \risk[\loss,\Pni](\fnitheoext) - \int_{\Xni} \innerriskbedbayes \diff\Pxvar(x) \right) \,,
	\end{align}
	where we applied \citet[Lemma 3.4]{steinwart2008} in the first, \textbf{(W1)}, \textbf{(W2)} and the convexity of \loss in the second, and \textbf{(W1)}, \textbf{(W3)} and $\innerriskbed(\fnitheoext) - \innerriskbedbayes \ge 0$ for all $x\in\X$ (by the definition of \innerriskbedbayes) in the third step. In the final step, we once more used that $\P(\Xni\times\Y)=0$ for $i\notin\IndexPpositive$.

	If we define $\lbntilde:=\max_{i\in\IndexPpositive}\betani^2\lbni$ as well as $\knitilde\in\{\kjr:j\in\{1,\dots,\ell\}, r\in\Ij\}$ such that $\knitilde\einschraenkung_{\Xni\times\Xni} = \kni$ and analogously $\kninulltilde\in\{\kjr[0]:j\in\{1,\dots,\ell\}\}$ such that $\kninulltilde\einschraenkung_{\Xni\times\Xni} = \kninull$, we can further analyze the right hand side of \eqref{eq:ProofLem_LocCons_Aux_LocSVMsRiskTheoBayes_RiskDiff} by noting that, for all $n\in\N$ and $i\in\IndexPpositive$,
	\begin{align*}
		&\risk[\loss,\Pni](\fnitheoext)\\
		&\le \risk[\loss,\Pni](\fnitheoext) + \lbni \cdot \norm{\Hni}{\fnitheoext}^2\\
		&\le \risk[\loss,\Pni](\fninull) + \lbni \cdot \norm{\Hni}{\fninull}^2\\
		&\le \risk[\loss,\Pni](\fninull) + \betani^2 \cdot \lbni \cdot \norm{\Hninull}{\fninull}^2\\
		&\le \risk[\loss,\Pni](\fnitildenull\einschraenkung_{\Xni}) + \betani^2 \cdot \lbni \cdot \norm{\Hninull}{\fnitildenull\einschraenkung_{\Xni}}^2 \\
		&\le \risk[\loss,\Pni](\fnitildenull\einschraenkung_{\Xni}) + \lbntilde \cdot \norm{\Hninull}{\fnitildenull\einschraenkung_{\Xni}}^2 \\
		&\le \risk[\loss,\Pni](\fnitildenull\einschraenkung_{\Xni}) + \lbntilde \cdot \norm{\Hninulltilde}{\fnitildenull}^2\,.
	\end{align*}
	Here, we employed the definition of \fnitheoext respectively \fninull as the minimizers of the respective regularized risks \citep[combined with the fact that $\fninull\in\Hninull\subseteq\Hni$ and that $\fnitildenull\einschraenkung_{\Xni}\in\Hninull$ by][Theorem 6]{berlinet2004} in the second and in the fourth step, and again \citet[Theorem 6]{berlinet2004} in the last step. Furthermore, the third step holds true because 
	\begin{align*}
		\norm{\Hni}{f} &= \min_{\substack{g\in\Hnitilde:\\ g\einschraenkungklein_{\Xni}=f}} \norm{\Hnitilde}{g} \le \min_{\substack{g\in\Hninulltilde:\\ g\einschraenkungklein_{\Xni}=f}} \norm{\Hnitilde}{g} \le \betani \cdot \min_{\substack{g\in\Hninulltilde:\\ g\einschraenkungklein_{\Xni}=f}} \norm{\Hninulltilde}{g} = \betani \cdot \norm{\Hninull}{f}
	\end{align*}
	for all $f\in\Hninull$, where we once more applied \citet[Theorem 6]{berlinet2004} and that $\Hninulltilde\subseteq\Hnitilde$.
	
	Plugging this into the right hand side of \eqref{eq:ProofLem_LocCons_Aux_LocSVMsRiskTheoBayes_RiskDiff}, we obtain
	\begin{align}\label{eq:ProofLem_LocCons_Aux_LocSVMsRiskTheoBayes_RiskDiffTeilB}
		&\risk(\fntheoext) - \riskbayes\notag\\ 
		&\le \sum_{i\in\IndexPpositive} \left( \P(\Xni\times\Y) \cdot \left(\risk[\loss,\Pni](\fnitildenull\einschraenkung_{\Xni}) + \lbntilde \cdot \norm{\Hninulltilde}{\fnitildenull}^2\right)\right.\notag\\ 
		&\hspace*{9.5cm}\left.- \int_{\Xni} \innerriskbedbayes \diff\Pxvar(x) \right)\notag\\
		&= \sum_{i\in\IndexPpositive} \left( \P(\Xni\times\Y) \cdot \lbntilde\cdot \norm{\Hninulltilde}{\fnitildenull}^2\right.\notag\\ 
		&\hspace*{4.5cm}\left.+ \int_{\Xni} \left( \innerriskbed(\fnitildenull(x)) - \innerriskbedbayes \right) \diff\Pxvar(x) \right)\notag\\
		&\le \sum_{j=1}^{\ell} \sum_{i=1}^{m_n} \left( \P(\Xni\times\Y) \cdot \lbntilde\cdot \norm{\Hjr[0]}{\fnjnull}^2 \right.\notag\\ 
		&\hspace*{4.5cm}\left.+ \int_{\Xni} \left( \innerriskbed(\fnjnull(x)) - \innerriskbedbayes \right) \diff\Pxvar(x) \right)\notag\\
		&\le \sum_{j=1}^{\ell} \smax \cdot \left( \lbntilde \cdot \norm{\Hjr[0]}{\fnjnull}^2 + \risk(\fnjnull) -\riskbayes \right)\,,
	\end{align}
	with the third step holding true because of the summands being non-negative and the final step employing that, for all $j\in\{1,\dots,l\}$,
	\begin{align*}
		&\sum_{i=1}^{m_n} \int_{\Xni} \left( \innerriskbed(\fnjnull(x)) - \innerriskbedbayes \right) \diff\Pxvar(x)\\ 
		&= \int_\X \sum_{i=1}^{m_n} \Ind[\Xni](x) \cdot \left( \innerriskbed(\fnjnull(x)) - \innerriskbedbayes \right) \diff\Pxvar(x) \\
		&\le \smax \cdot \left( \risk(\fnjnull) -\riskbayes \right)
	\end{align*}
	by \textbf{(R2)}, and analogously $\sum_{i=1}^{m_n} \P(\Xni\times\Y) \le \smax$. 
	
	Now, by \citet[Lemma~2.38(i)]{steinwart2008}, \loss is a \P-integrable Nemitski loss of order $p$. Hence, for all $j\in\{1,\dots,l\}$, we know from \citet[Theorem 5.31]{steinwart2008} that
	\begin{align*}
		\riskoptH[\loss,\P,\H^{(j,0)}] := \inf_{f\in\H^{(j,0)}} \risk(f) = \riskbayes < \infty
	\end{align*} 
	and \citet[Lemma 5.15]{steinwart2008} then yields that
	\begin{align*}
		\limn \lbntilde \norm{\Hjr[0]}{\fnjnull}^2 + \risk(\fnjnull) - \riskbayes = 0
	\end{align*}
	because $\lbntilde\to 0$ as $n\to\infty$. Thus, the whole right hand side of \eqref{eq:ProofLem_LocCons_Aux_LocSVMsRiskTheoBayes_RiskDiffTeilB} converges to 0 as $n\to\infty$ and we obtain the assertion because $\risk(\fntheoext) - \riskbayes \ge 0$ by the definition of \riskbayes.
\end{proof}

\begin{mylem}\label{Lem:LocCons_Aux_LocSVMsRiskEmpTheo}
	Let \Cref{Ann:LocCons_Pre_AllgAnn} be satisfied. Let $\loss\colon\YR\to[0,\infty)$ be a convex, distance-based loss function of upper growth type $p\in[1,\infty)$. Assume that $|\P|_p<\infty$. Let \fntheoext and \fnempext, $n\in\N$, be defined as in \eqref{eq:LocCons_Loc_DefGlobalPredictor} and \eqref{eq:LocCons_Loc_DefGlobalPredictorEmp} such that the underlying regionalizations and weight functions satisfy \textbf{(R1)}, \textbf{(R3)}, \textbf{(W1)}, \textbf{(W2)}, \textbf{(W3)} and $\sup_{n\in\N,i\in\IndexPpositive} |\Pni|_p <\infty$. Assume that, for all $n\in\N$ and $i\in\{1,\dots,m_n\}$, \kni is a bounded and measurable kernel on \Xni with separable RKHS \Hni, such that\break $\sup_{n\in\N,i\in\IndexPpositive} \normSup{\kni} < \infty$. Define $p_1^*:=\max\{p+1,p(p+1)/2\}$ and $p_3^*:=\max\{p-1,p(p-1)/2\}$. Further choose $p_2^*:=\max\{2(p-1)/p,p-1\}$ if $p>1$ and $p_2^*\in (0,\infty)$ arbitrary if $p=1$. If the regularization parameters satisfy $\lbni\in(0,C)$ for all $n\in\N$ and $i\in\{1,\dots,m_n\}$ for some $C\in(0,\infty)$, as well as
	\begin{align}\label{eq:Lem_LocCons_Aux_LocSVMsRiskEmpTheo_BedingungLambda}
		\min_{i,j\in\IndexPpositive} \frac{\lbni[j]^{p_3^*}\lbni^{p_1^*}\dni}{\AnzPpositive^{p_2^*}} \to \infty
	\end{align}
	as $n\to\infty$, then
	\begin{align*}
		\limn \left|\risk(\fnempext)-\risk(\fntheoext)\right| = 0 \qquad \text{in probability $\P^\infty$.} 
	\end{align*}
	If additionally, the regionalizations \Xnbold, $n\in\N$, are partitions of \X, then it suffices if \eqref{eq:Lem_LocCons_Aux_LocSVMsRiskEmpTheo_BedingungLambda} is satisfied for $p_1^*:=\max\{2p,p^2\}$ and $p_3^*:=0$.
\end{mylem}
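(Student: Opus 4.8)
The plan is to bound the risk difference by the $L_\infty(\Pxvar)$-distance between $\fnempext$ and $\fntheoext$, which \Cref{Lem:LocCons_Aux_LocSVMsSupNormEmpTheo} already controls, paying for the reduction with a factor that measures how large the individual local SVMs can become, and then to absorb that factor by refining the probabilistic estimate in the proof of \Cref{Lem:LocCons_Aux_LocSVMsSupNormEmpTheo}. The first ingredient is the local Lipschitz property of a convex, distance-based loss of upper growth type $p$: there is a constant $c_\loss\in(0,\infty)$ depending only on $\loss$ with
\begin{align*}
	|\loss(y,t_1)-\loss(y,t_2)| \le c_\loss\bigl(|y|^{p-1}+|t_1|^{p-1}+|t_2|^{p-1}+1\bigr)\cdot|t_1-t_2| \qquad \forall\, y\in\Y,\ t_1,t_2\in\R\,,
\end{align*}
which follows from the convexity of the representing function $\psi$ by bounding its difference quotients, choosing the step size proportional to $|r|$ so as to obtain the exponent $p-1$ rather than $p$ (\vgl the analogous non-localized treatment in \citealp{koehler2023arxiv}). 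Integrating this inequality against \P and using $\int_{\XY}|y|^{p-1}\diff\P\le|\P|_p^{p-1}<\infty$ by Hölder's inequality gives $|\risk(\fnempext)-\risk(\fntheoext)| \le c_\loss\,\normLinftypxvar{\fnempext-\fntheoext}\cdot(|\P|_p^{p-1}+\normLinftypxvar{\fnempext}^{p-1}+\normLinftypxvar{\fntheoext}^{p-1}+1)$. As in the proof of \Cref{Lem:LocCons_Aux_LocSVMsSupNormEmpTheo} --- via \citet[Lemma~4.23, equation~(5.4) and Lemma~2.38(i)]{steinwart2008}, $\kappa:=\sup_{n\in\N,\,i\in\IndexPpositive}\normSup{\kni}<\infty$ and $\sup_{n,i\in\IndexPpositive}|\Pni|_p<\infty$ --- one has the deterministic bound $\normLinftypxvar{\fntheoext}\le c\,(\min_{i\in\IndexPpositive}\lbni)^{-1/2}$; moreover, since \eqref{eq:Lem_LocCons_Aux_LocSVMsRiskEmpTheo_BedingungLambda} implies condition \eqref{eq:Lem_LocCons_Aux_LocSVMsSupNormEmpTheo_BedingungLambda} (because $\lbni<C$), that proof also gives $\normLinftypxvar{\fnempext-\fntheoext}\le1$, hence $\normLinftypxvar{\fnempext}\le c'\,(\min_{i\in\IndexPpositive}\lbni)^{-1/2}$, on an event of probability tending to $1$. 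Plugging these bounds in yields, on that event,
\begin{align*}
	\bigl|\risk(\fnempext)-\risk(\fntheoext)\bigr| \le \tilde c\cdot\normLinftypxvar{\fnempext-\fntheoext}\cdot\bigl(\min_{i\in\IndexPpositive}\lbni\bigr)^{-(p-1)/2}\,.
\end{align*}

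It therefore remains to show that this right-hand side converges to $0$ in probability $\P^\infty$, which I would do by rerunning the estimate in the proof of \Cref{Lem:LocCons_Aux_LocSVMsSupNormEmpTheo} with the threshold $\eps$ replaced by $\eps\,(\min_{i\in\IndexPpositive}\lbni)^{(p-1)/2}$. Conditioning on the cardinalities $\dni$, using $\normLinftypxvar{\fnempext-\fntheoext}\le\kappa\max_{i\in\IndexPpositive}\norm{\Hni}{\fniempext-\fnitheoext}$ and a union bound over $i\in\IndexPpositive$ together with \citet[Corollary~5.11 and Lemma~9.2]{steinwart2008}, one arrives at a bound of the shape
\begin{align*}
	c_\eps\cdot\AnzPpositive\cdot\max_{i\in\IndexPpositive}\left(\frac{1}{\lbni^{(p+1)/2}\,\bigl(\min_{i'\in\IndexPpositive}\lbni[i']\bigr)^{(p-1)/2}\,\dni^{q^*}}\right)^{\!q}\,,
\end{align*}
with $q$ and $q^*=(p+1)/(2p_1^*)$ as in that proof. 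Raising the denominator to the power $1/(qq^*)=p_2^*$ turns $\lbni^{(p+1)/2}$ into $\lbni^{p_1^*}$ as before, and turns $(\min_{i'}\lbni[i'])^{(p-1)/2}$ into $(\min_{i'}\lbni[i'])^{(p-1)p_1^*/(p+1)}=(\min_{i'}\lbni[i'])^{p_3^*}$, using $p_1^*=\tfrac{p+1}{2}\max\{2,p\}$ and $p_3^*=\tfrac{p-1}{2}\max\{2,p\}$; the expression thus equals $c_\eps\cdot\bigl(\max_{i,j\in\IndexPpositive}\AnzPpositive^{p_2^*}/(\lbni[j]^{p_3^*}\lbni^{p_1^*}\dni)\bigr)^{qq^*}$, which converges to $0$ by \eqref{eq:Lem_LocCons_Aux_LocSVMsRiskEmpTheo_BedingungLambda}. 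Unconditioning on the $\dni$ exactly as in the proof of \Cref{Lem:LocCons_Aux_LocSVMsSupNormEmpTheo} then finishes the general assertion.

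For the supplementary statement, assume each $\Xnbold$ is a partition of \X. Then for $\Pxvar$-almost every $x$ there is a unique $i$ with $x\in\Xni$, and $\fnempext(x)=\fniempext(x)$, $\fntheoext(x)=\fnitheoext(x)$ there; splitting $\risk(\fnempext)-\risk(\fntheoext)=\sum_{i\in\IndexPpositive}\int_{\Xni\times\Y}\bigl(\loss(y,\fnempext(x))-\loss(y,\fntheoext(x))\bigr)\diff\P$, applying the local Lipschitz bound region by region, and using that $\normLinftypnix{\fniempext}\le c\,\lbni^{-1/2}$ for all $i\in\IndexPpositive$ on an event of probability tending to $1$ (again by the argument of \Cref{Lem:LocCons_Aux_LocSVMsSupNormEmpTheo}, applied region-wise) together with $\sum_{i\in\IndexPpositive}\P(\Xni\times\Y)=1$, one obtains on that event
\begin{align*}
	\bigl|\risk(\fnempext)-\risk(\fntheoext)\bigr| \le c'\sum_{i\in\IndexPpositive}\P(\Xni\times\Y)\,\lbni^{-(p-1)/2}\,\normLinftypnix{\fniempext-\fnitheoext} \le c'\max_{i\in\IndexPpositive}\lbni^{-(p-1)/2}\,\normLinftypnix{\fniempext-\fnitheoext}\,.
\end{align*}
Since $\normLinftypnix{\fniempext-\fnitheoext}\le\kappa\,\norm{\Hni}{\fniempext-\fnitheoext}$, one is thus reduced to showing $\max_{i\in\IndexPpositive}\lbni^{-(p-1)/2}\norm{\Hni}{\fniempext-\fnitheoext}\to0$ in probability, where now the extra power $(p-1)/2$ attaches to the region's \emph{own} parameter $\lbni$, so in the estimate of the previous paragraph $\lbni^{(p+1)/2}$ is replaced by $\lbni^{(p+1)/2}\cdot\lbni^{(p-1)/2}=\lbni^{p}$ and, after raising to the power $1/(qq^*)$, by $\lbni^{p/q^*}=\lbni^{\max\{2p,p^2\}}$; it therefore suffices that $\min_{i\in\IndexPpositive}\lbni^{\max\{2p,p^2\}}\dni/\AnzPpositive^{p_2^*}\to\infty$, i.e.\ that \eqref{eq:Lem_LocCons_Aux_LocSVMsRiskEmpTheo_BedingungLambda} hold with $p_1^*=\max\{2p,p^2\}$ and $p_3^*=0$.

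The main obstacle is the careful bookkeeping of exponents rather than any conceptual difficulty: one has to check that the power of the minimal regularization parameter needed to absorb the growth of the sup-norms is exactly $p_3^*$, and that in the overlapping case this power attaches to an index $j$ which is, in general, different from the index $i$ governing the rate (because the value of $\fnempext$ at a point of $\Xni$ may be influenced by the local SVMs of all $\le\smax$ regions through that point), whereas for partitions both indices coincide --- which is precisely why the two forms of \eqref{eq:Lem_LocCons_Aux_LocSVMsRiskEmpTheo_BedingungLambda} differ as stated. A secondary point is deriving the local Lipschitz estimate for $\psi$ with the growth exponent $p-1$ (not $p$), which requires the $|r|$-proportional step size in the difference quotients together with a separate elementary bound near the origin.
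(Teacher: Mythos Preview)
Your proposal is correct and follows essentially the same route as the paper's proof: both reduce the risk difference to $(\min_{i}\lbni)^{-(p-1)/2}\cdot\normLinftypxvar{\fnempext-\fntheoext}$ via the local Lipschitz property of a convex distance-based loss of upper growth type $p$ (the paper cites this as \citet[Lemma~2.38(ii)]{steinwart2008} rather than deriving it), and then rerun the probability estimate from \Cref{Lem:LocCons_Aux_LocSVMsSupNormEmpTheo} with the shifted threshold $\eps\,(\min_{i}\lbni)^{(p-1)/2}$, handling the partition case by the same region-wise localization so that the extra factor $\lbni^{-(p-1)/2}$ carries the region's own index. The only cosmetic difference is that the paper bounds $\normLppxvar{\fnempext}^{p-1}$ by the algebraic triangle-inequality splitting $2^{p-1}c\,\lbntilde^{-(p-1)/2}+2^{p-1}\normLppxvar{\fnempext-\fntheoext}^{p-1}$ (producing an additional harmless term $\normLinftypxvar{\fnempext-\fntheoext}^{p}$), whereas you achieve the same effect by first passing to the high-probability event $\{\normLinftypxvar{\fnempext-\fntheoext}\le1\}$.
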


\begin{proof}
	Assume, for all $n\in\N$ and $i\in\IndexPpositive$, without loss of generality that $\dni>0$ (which by \eqref{eq:Lem_LocCons_Aux_LocSVMsSupNormEmpTheo_BedingungLambda} has to be satisfied for $n$ sufficiently large), such that the respective local empirical SVM \fniempext is indeed an empirical SVM and not just defined as the zero function. To shorten the notation, we denote $\fntheo:=\fntheoext$, $\fnemp:=\fnempext$, $\fnitheo:=\fnitheoext$ and $\fniemp:=\fniempext$ for all $n\in\N$ and $i\in\{1,\dots,m_n\}$, as well as $\kappa:=\sup_{n\in\N,i\in\IndexPpositive} \normSup{\kni}$, $\rho:=|\P|_p\vee\sup_{n\in\N,i\in\IndexPpositive} |\Pni|_p$ and $\lbntilde:=\min_{i\in\IndexPpositive}\lbni$ throughout this proof. Additionally, note that \Cref{Lem:LocCons_Aux_LocSVMsSupNormEmpTheo} is applicable in the situation of this lemma (in the base case as well as in the special case of the regionalizations being partitions of \X) as \eqref{eq:Lem_LocCons_Aux_LocSVMsRiskEmpTheo_BedingungLambda} in combination with $\lbni[j]\in(0,C)$ for all $n\in\N$ and $j\in\{1,\dots,m_n\}$ implies the validity of \eqref{eq:Lem_LocCons_Aux_LocSVMsSupNormEmpTheo_BedingungLambda}.
	
	We start by proving the main assertion before turning our attention to the special case of the regionalizations being partitions of \X afterwards.
	
	By applying \citet[Lemma~2.38(ii)]{steinwart2008} with $q:=p$, we obtain
	\begin{align}\label{eq:ProofLem_LocCons_Aux_LocSVMsRiskEmpTheo_RiskDiffAbsch}
		&|\risk(\fnemp)-\risk(\fntheo)|\notag\\ 
		&\le c_{p,\loss} \cdot \left( |\P|_p^{p-1} + \normLppxvar{\fntheo}^{p-1} + \normLppxvar{\fnemp}^{p-1} + 1 \right) \cdot \normLppxvar{\fnemp-\fntheo}\,,
	\end{align}
	where $c_{p,\loss}\in(0,\infty)$ denotes a constant only depending on $p$ and \loss.
	
	We can further analyze the right hand side of this inequality by noting that
	\begin{align*}
		\normLinftypxvar{\fntheo} &\le \max_{i\in\{1,\dots,m_n\}} \normLinftypxvar{\fnitheodach}\\ 
		&= \max_{i\in\IndexPpositive} \normLinftypnix{\fnitheo} \le \max_{i\in\IndexPpositive} c_{p,\loss,\rho,\kappa} \cdot \lbni^{-1/2}\,,
	\end{align*}
	with the first inequality following from \textbf{(W1)} and \textbf{(W2)}, similarly to \eqref{eq:ProofLem_LocCons_Aux_LocSVMsSupNormEmpTheo_Aufteilung}, 
	and the last one analogously to \eqref{eq:ProofLem_LocCons_Aux_LocSVMsSupNormEmpTheo_AbschSup}, with $c_{p,\loss,\rho,\kappa}\in(0,\infty)$ denoting a constant depending only on $p$, $\loss$, $\rho$ and $\kappa$.
	Hence, 
	\begin{align}\label{eq:ProofLem_LocCons_Aux_LocSVMsRiskEmpTheo_TheoSVMAbsch}
		\normLppxvar{\fntheo}^{p-1} &\le \normLinftypxvar{\fntheo}^{p-1} \le \max_{i\in\IndexPpositive} c_{p,\loss,\rho,\kappa}^{p-1} \cdot \lbni^{-(p-1)/2} = c_{p,\loss,\rho,\kappa}^{p-1} \cdot \lbntilde^{-(p-1)/2}\,.
	\end{align}
	Similarly, we obtain
	\begin{align}\label{eq:ProofLem_LocCons_Aux_LocSVMsRiskEmpTheo_EmpSVMAbsch}
		\normLppxvar{\fnemp}^{p-1} &\le \left(\normLppxvar{\fntheo} + \normLppxvar{\fnemp-\fntheo} \right)^{p-1}\notag\\
		&\le 2^{p-1} \cdot c_{p,\loss,\rho,\kappa}^{p-1} \cdot \lbntilde^{-(p-1)/2} + 2^{p-1} \cdot \normLppxvar{\fnemp-\fntheo}^{p-1}\,,
	\end{align}
	where we applied \eqref{eq:ProofLem_LocCons_Aux_LocSVMsRiskEmpTheo_TheoSVMAbsch} in the last step.
	
	Plugging \eqref{eq:ProofLem_LocCons_Aux_LocSVMsRiskEmpTheo_TheoSVMAbsch} and \eqref{eq:ProofLem_LocCons_Aux_LocSVMsRiskEmpTheo_EmpSVMAbsch} into \eqref{eq:ProofLem_LocCons_Aux_LocSVMsRiskEmpTheo_RiskDiffAbsch} then yields
	\begin{align*}
		&|\risk(\fnemp)-\risk(\fntheo)|\\
		&\le c_{p,\loss} \cdot \left( \rho^{p-1} + (2^{p-1}+1) \cdot c_{p,\loss,\rho,\kappa}^{p-1} \cdot \lbntilde^{-(p-1)/2} + 2^{p-1} \cdot \normLppxvar{\fnemp-\fntheo}^{p-1} + 1 \right)\notag\\ 
		&\hspace*{10cm} \cdot \normLppxvar{\fnemp-\fntheo}\\
		&= c_{p,\loss} \cdot \Big( \left( \rho^{p-1}  \lbntilde^{(p-1)/2} + (2^{p-1}+1) \cdot c_{p,\loss,\rho,\kappa}^{p-1} + \lbntilde^{(p-1)/2} \right) \notag\\ 
		&\hspace*{3cm} \cdot \lbntilde^{-(p-1)/2} \cdot \normLppxvar{\fnemp-\fntheo} + 2^{p-1} \cdot \normLppxvar{\fnemp-\fntheo}^p  \Big)\notag\\
		&\le \tilde{c}_{p,\loss,\rho,\kappa} \cdot \left(\lbntilde^{-(p-1)/2} \cdot \normLinftypxvar{\fnemp-\fntheo}  +  \normLinftypxvar{\fnemp-\fntheo}^p \right)\,,
	\end{align*}
	where we employed $\lbntilde\le C$ and $\normLppxvar{\fnemp-\fntheo}\le\normLinftypxvar{\fnemp-\fntheo}$ in the last step.
	
	We know from \Cref{Lem:LocCons_Aux_LocSVMsSupNormEmpTheo} that the second summand on the right hand side converges to 0 in probability as $n\to\infty$. Hence, we only need to further investigate the first summand. For this, we can proceed in exactly the same way as in the proof of \Cref{Lem:LocCons_Aux_LocSVMsSupNormEmpTheo} and only need to additionally consider the factor $\lbntilde^{-(p-1)/2}$. By doing this, we obtain for all $\eps>0$
	\begin{align}\label{eq:ProofLem_LocCons_Aux_LocSVMsRiskEmpTheo_WktAbsch}
		&\P^n\left(\Dn\in(\X\times\Y)^n : \lbntilde^{-(p-1)/2}\cdot\normLinftypxvar{\fnemp-\fntheo} \ge \eps\right.\notag\\
		&\hspace*{8cm}\left.\Big|\, |\Dni[1]|=\dni[1],\dots,|\Dni[m_n]|=\dni[m_n] \right) \notag\\
		&\le \P^n\left(\Dn\in(\X\times\Y)^n : \max_{i\in\IndexPpositive} \norm{\Hni}{\fniemp-\fnitheo} \ge \frac{\eps\lbntilde^{(p-1)/2}}{\kappa}\right.\notag\\ 
		&\hspace*{8cm} \Big|\, |\Dni[1]|=\dni[1],\dots,|\Dni[m_n]|=\dni[m_n] \Bigg) \notag\\
		&\le \sum_{i\in\IndexPpositive} \Pni^{\dni}\left(\Dni\in(\Xni\times\Y)^{\dni} : \norm{\Hni}{\fniemp-\fnitheo} \ge \frac{\eps\lbntilde^{(p-1)/2}}{\kappa}\right) \notag\\
		&\le  c_{q,p,\loss,\rho,\kappa} \cdot  \AnzPpositive \cdot \max_{i\in\IndexPpositive}\left(\frac{1}{\lbntilde^{(p-1)/2}\lbni^{(p+1)/2}\eps \dni^{q^*}}\right)^q\,,
	\end{align}
	analogously to \eqref{eq:ProofLem_LocCons_Aux_LocSVMsSupNormEmpTheo_WktAbsch}, with $c_{q,p,\loss,\rho,\kappa}\in(0,\infty)$ denoting a constant depending only on $q$, $p$, \loss, $\rho$ and $\kappa$. Here, as in the proof of \Cref{Lem:LocCons_Aux_LocSVMsSupNormEmpTheo}, $q:=p/(p-1)$ if $p>1$, $q:=2/p_2^*$ if $p=1$, and $q^*:=\min\{1/2,1-1/q\}=\min\{1/2,1/p\} = (p+1)/(2p_1^*) = (p-1)/(2p_3^*)$. 
	
	Because $(qq^*)^{-1} = p_2^*$ (\vgl proof of \Cref{Lem:LocCons_Aux_LocSVMsSupNormEmpTheo}), we furthermore obtain
	\begin{align*}
		\AnzPpositive \cdot \max_{i\in\IndexPpositive}\left(\frac{1}{\lbntilde^{(p-1)/2}\lbni^{(p+1)/2} \dni^{q^*}}\right)^q =  \max_{i\in\IndexPpositive}\left(\frac{\AnzPpositive^{p_2^*}}{\lbntilde^{p_3^*}\lbni^{p_1^*} \dni}\right)^{qq^*} \to 0\,,\qquad n\to\infty\,,
	\end{align*}
	by assumption. Hence, the whole right hand side of \eqref{eq:ProofLem_LocCons_Aux_LocSVMsRiskEmpTheo_WktAbsch} converges to 0, which yields the main assertion.
	
	As for the special case of the regionalizations being partitions of \X: If \Xnbold is a partition of \X, then the conditions \textbf{(W2)} and \textbf{(W3)} imply that $\wni=\Ind[\Xni]$ for all $i\in\{1,\dots,m_n\}$. Hence, we obtain
	\begin{align}\label{eq:ProofLem_LocCons_Aux_LocSVMsRiskEmpTheo_Falli}
		&|\risk(\fnemp)-\risk(\fntheo)|\notag\\ 
		&= \left| \int_{\XY} \loss\left(y,\sum_{i=1}^{m_n}\Ind[\Xni](x)\fniempdach(x)\right) \diff\P(x,y)\right.\notag\\ 
		&\hspace*{6cm}\left.- \int_{\XY} \loss\left(y,\sum_{i=1}^{m_n}\Ind[\Xni](x)\fnitheodach(x)\right) \diff\P(x,y) \right|\notag\\
		&= \left| \sum_{i=1}^{m_n} \left( \int_{\Xni\times\Y} \loss(y,\fniemp(x)) \diff\P(x,y) - \int_{\Xni\times\Y} \loss(y,\fnitheo(x)) \diff\P(x,y) \right) \right|\notag\\
		&\le \sum_{i\in\IndexPpositive} \P(\Xni\times\Y) \cdot \left| \risk[\loss,\Pni](\fniemp) - \risk[\loss,\Pni](\fnitheo) \right|\notag\\
		&\le \max_{i\in\IndexPpositive} \left| \risk[\loss,\Pni](\fniemp) - \risk[\loss,\Pni](\fnitheo) \right|
	\end{align}
	in this case. In the third step, we applied that $\Xni\times\Y$ is a \P-zero set for all $i\notin\IndexPpositive$, leading to the according \P-integrals being 0.
	
	The argument of the maximum on the right hand side of \eqref{eq:ProofLem_LocCons_Aux_LocSVMsRiskEmpTheo_Falli} can, for each $i\in\IndexPpositive$, be examined in the same way as we previously examined the difference on the left hand side for proving the main assertion. A difference appears in \eqref{eq:ProofLem_LocCons_Aux_LocSVMsRiskEmpTheo_TheoSVMAbsch}, where we now have
	\begin{align*}
		\norm{\Lp(\Pnix)}{\fnitheo}^{p-1} \le \normLinftypnix{\fnitheo}^{p-1} \le c_{p,\loss,\rho,\kappa}^{p-1} \cdot \lbni^{-(p-1)/2}\,.
	\end{align*}
	That is, we can omit the final step of bounding this with the help of \lbntilde because we are now not interested in $\max_{i\in\IndexPpositive}\normLinftypnix{\fnitheo}$ but only in $\normLinftypnix{\fnitheo}$ for a specific $i$.
	
	By applying this to the subsequent steps of our proof, we obtain
	\begin{align*}
		&|\risk(\fnemp)-\risk(\fntheo)|\\
		&\le \max_{i\in\IndexPpositive} \left| \risk[\loss,\Pni](\fniemp) - \risk[\loss,\Pni](\fnitheo) \right|\\
		&\le \tilde{c}_{p,\loss,\rho,\kappa} \cdot \max_{i\in\IndexPpositive} \left( \lbni^{-(p-1)/2} \cdot \normLinftypnix{\fniemp-\fnitheo} + \normLinftypnix{\fniemp-\fnitheo}^p \right)\\
		&\le \tilde{c}_{p,\loss,\rho,\kappa} \cdot \left( \max_{i\in\IndexPpositive} \left( \lbni^{-(p-1)/2} \cdot \normLinftypnix{\fniemp-\fnitheo} \right) + \normLinftypxvar{\fnemp-\fntheo}^p \right)\,,
	\end{align*}
	where the second summand on the right hand side converges to 0 in probability by \Cref{Lem:LocCons_Aux_LocSVMsSupNormEmpTheo}.
	
	As for the first summand, we can derive
	\begin{align*}
		&\P^n\Bigg(\Dn\in(\X\times\Y)^n : \max_{i\in\IndexPpositive} \left( \lbni^{-(p-1)/2} \cdot \normLinftypnix{\fniemp-\fnitheo} \right) \ge \eps\notag\\ 
		&\hspace*{8cm} \Big|\, |\Dni[1]|=\dni[1],\dots,|\Dni[m_n]|=\dni[m_n] \Bigg) \notag\\
		&\le  c_{q,p,\loss,\rho,\kappa} \cdot  \AnzPpositive \cdot \max_{i\in\IndexPpositive}\left(\frac{1}{\lbni^{p}\eps \dni^{q^*}}\right)^q\,,
	\end{align*}
	analogously to \eqref{eq:ProofLem_LocCons_Aux_LocSVMsRiskEmpTheo_WktAbsch}. Finally, we obtain convergence to 0 of the right hand side, and thus the assertion, because 
	\begin{align*}
		\AnzPpositive \cdot \max_{i\in\IndexPpositive}\left(\frac{1}{\lbni^{p} \dni^{q^*}}\right)^q =  \max_{i\in\IndexPpositive}\left(\frac{\AnzPpositive^{p_2^*}}{\lbni^{p_1^*} \dni}\right)^{qq^*} \to 0\,,\qquad n\to\infty\,,
	\end{align*}
	by assumption, where we applied that $p/q^*=p_1^*$ since $p_1^*=\max\{2p,p^2\}$ now.
\end{proof}

\section{Proofs}

\begin{proof}[Proof of \Cref{Thm:LocCons_Cons_LpCons}]
	We can split up the difference, which we wish to investigate, as
	\begin{align}\label{eq:ProofThm_LocCons_Cons_LpCons_Aufteilung}
		&\normLppxvar{\fnempext-\fbayes} \notag\\
		&\le \normLppxvar{\fnempext-\fntheoext} + \normLppxvar{\fntheoext-\fbayes}\,.
	\end{align}
	Because $\normLppxvar{\fnempext-\fntheoext}\le\normLinftypxvar{\fnempext-\fntheoext}$, we know from \Cref{Lem:LocCons_Aux_LocSVMsSupNormEmpTheo} that the first summand on the right hand side converges to 0 in probability as $n\to\infty$. 
	
	Thus, only the second summand remains to be examined: From \Cref{Lem:LocCons_Aux_LocSVMsRiskTheoBayes}, we obtain 
	\begin{align*}
		\limn\risk(\fntheoext) = \riskbayes\,.
	\end{align*}
	
	We further know for all $n\in\N$ that $\fntheoext\in\Lppxvar$ because
	\begin{align*}
		&\normLppxvar{\fntheoext} \le \normLinftypxvar{\fntheoext} \le \max_{i\in\{1,\dots,m_n\}} \normLinftypxvar{\fnitheoextdach}\\ 
		&\le \max_{i\in\IndexPpositive} \normLinftypnix{\fnitheoext} \le \max_{i\in\IndexPpositive} \normSup{\kni} \norm{\Hni}{\fnitheoext} < \infty 
	\end{align*}
	by \textbf{(W1)}, \textbf{(W2)} and \citet[Lemma~4.23]{steinwart2008}, similarly to \eqref{eq:ProofLem_LocCons_Aux_LocSVMsSupNormEmpTheo_Aufteilung}. Employing \citet[Theorem 3.2 and Remark 3.3]{koehler2023arxiv} then yields convergence to 0 (as $n\to\infty$) of the second summand on the right hand side of \eqref{eq:ProofThm_LocCons_Cons_LpCons_Aufteilung}, which completes the proof.
\end{proof}

\begin{proof}[Proof of \Cref{Thm:LocCons_Cons_RiskCons}]	
	We start by proving the main assertion and the special case (i): We can split up the difference, which we wish to investigate, as
	\begin{align}\label{eq:ProofThm_LocCons_Cons_RiskCons_Aufteilung}
		&|\risk(\fnempext)-\riskbayes|\notag\\ 
		&\le |\risk(\fnempext)-\risk(\fntheoext)| + |\risk(\fntheoext)-\riskbayes|\,. 
	\end{align}
	The assertions then follow directly by applying \Cref{Lem:LocCons_Aux_LocSVMsRiskEmpTheo} to the first and \Cref{Lem:LocCons_Aux_LocSVMsRiskTheoBayes} to the second summand on the right hand side.
	
	As for the special case (ii): If \fbayes is \Pxvar-\fs unique, the assertion follows directly from \Cref{Thm:LocCons_Cons_LpCons} and \citet[Theorem~3.4]{koehler2023arxiv}, which is applicable because $\fbayes\in\Lppxvar$ \citep[\vgl][Remark~3.3]{koehler2023arxiv} and $\fnempext\in\Lppxvar$ for all $n\in\N$ (\vgl proof of \Cref{Thm:LocCons_Cons_LpCons}).
\end{proof}

\end{document}